\icmltitlerunning{The Geometry of Sign Gradient Descent}
\definecolor{mydarkblue}{rgb}{0,0.08,0.45}
\DeclareMathOperator{\sign}{sign}
\DeclareMathOperator*{\argmin}{arg\, min}
\DeclareMathOperator*{\argmax}{arg\, max}
\DeclareMathOperator{\diag}{diag}
\DeclareMathOperator{\norm}{norm}
\DeclareMathOperator{\trace}{tr}
\newcommand{\R}{\mathbb{R}}
\newtheorem{theorem}{Theorem}
\newtheorem{lemma}{Lemma}
\newtheorem{proposition}{Proposition}
\newtheorem{definition}{Definition}
\def\xx{\boldsymbol x}
\def\yy{\boldsymbol x^\prime}
\def\zz{\boldsymbol z}
\def\ee{\boldsymbol e}
\def\ss{\boldsymbol s}
\def\g{\boldsymbol g}
\def\mm{\boldsymbol m}
\def\vv{\boldsymbol v}
\def\HH{\boldsymbol H}
\def\II{\boldsymbol I}
\def\LL{\boldsymbol L}
\def\defas{\coloneqq}
\def\EE{{\mathbb E}}
\newcommand{\ggdop}[2]{#2^{#1}}
\begin{document}

\twocolumn[
\icmltitle{The Geometry of Sign Gradient Descent}



\icmlsetsymbol{equal}{*}

\begin{icmlauthorlist}
\icmlauthor{Lukas Balles}{tue,mpi}
\icmlauthor{Fabian Pedregosa}{goo}
\icmlauthor{Nicolas Le Roux}{goo,mcg}
\end{icmlauthorlist}

\icmlaffiliation{mpi}{MPI for Intelligent Systems, Tuebingen}
\icmlaffiliation{tue}{University of Tuebingen}
\icmlaffiliation{goo}{Google Research, Brain Team}
\icmlaffiliation{mcg}{Mila, McGill University, Montreal}

\icmlcorrespondingauthor{Lukas Balles}{lballes@tue.mpg.de}

\icmlkeywords{Machine Learning, ICML}

\vskip 0.3in
]



\printAffiliationsAndNotice{}  

\begin{abstract}

Sign-based optimization methods have become popular in machine learning due to their favorable communication cost in distributed optimization and their surprisingly good performance in neural network training.
Furthermore, they are closely connected to so-called adaptive gradient methods like Adam.
Recent works on signSGD have used a non-standard ``separable smoothness'' assumption, whereas some older works study sign gradient descent as steepest descent with respect to the $\ell_\infty$-norm.
In this work, we unify these existing results by showing a close connection between separable smoothness and $\ell_\infty$-smoothness and argue that the latter is the weaker and more natural assumption.
We then proceed to study the smoothness constant with respect to the $\ell_\infty$-norm and thereby isolate geometric properties of the objective function which affect the performance of sign-based methods.
In short, we find sign-based methods to be preferable over gradient descent if (i) the Hessian is to some degree concentrated on its diagonal, and (ii) its maximal eigenvalue is much larger than the average eigenvalue.
Both properties are common in deep networks.
\end{abstract}

\section{Introduction}

We consider an unconstrained, continuous optimization problem, $\min_{\xx\in\mathbb{R}^d} f(\xx)$, with a differentiable and lower-bounded objective $f\colon \R^d \to \R$.
The prototypical optimization algorithm to solve such problems is gradient descent (GD), which iteratively updates $\xx_{t+1} = \xx_t - \alpha_t \nabla f_t$ with $\nabla f_t = \nabla f(\xx_t)$.
In machine learning, computing $\nabla f_t$ is often inefficient and instead one resorts to stochastic gradient descent (SGD), $\xx_{t+1} = \xx_t - \alpha_t \g_t$, where $\g_t$ is a stochastic gradient estimate which can be obtained at lower cost, e.g., by data subsampling.

Several recent works have considered the sign gradient descent (signGD) method and its stochastic counterpart (signSGD)
\begin{align}
    \label{eq:signgd_without_norm}
    \xx_{t+1} &= \xx_t - \alpha_t \sign(\nabla f_t), \\
    \label{eq:signsgd_without_norm}
    \xx_{t+1} &= \xx_t - \alpha_t \sign(\g_t),
\end{align}
where the $\sign$ is applied elementwise.
In particular, these methods have been studied in the context of distributed optimization where they conveniently reduce the communication cost to a single bit per gradient coordinate~\citep[e.g.,][]{seide2014onebit, bernstein2018signsgd, karimireddy2019error}.
SignSGD is also of interest due to a connection to the popular Adam method \citep{kingma2014adam} on which we will expand in Section~\ref{sec:signgd_and_adam}.

\paragraph{Analysis of Sign-Based Methods}

Multiple authors \citep{kelner2014almost,carlson2015stochastic,karimi2016linear} have analyzed variants of sign-based methods under the assumption of smoothness with respect to the $\ell_\infty$-norm (maximum norm), i.e.,
\begin{equation}
    \Vert \nabla f(\yy) - \nabla f(\xx) \Vert_1 \leq L_\infty \Vert \yy - \xx \Vert_\infty
\end{equation}
for all $\xx, \yy \in \R^d$ with smoothness constant $L_\infty>0$.
On the other hand, \citet{bernstein2018signsgd} have analyzed signSGD under a non-standard smoothness assumption that there are constants $l_1\dotsc, l_d > 0$ such that
\begin{equation}
    \label{eq:separable_smoothness_def}
    f(\yy) \leq f(\xx) + \langle \nabla f(\xx), \yy-\xx\rangle + \frac{1}{2} \sum_i l_i (\yy_i - \xx_i)^2
\end{equation}
for all $\xx, \yy\in\R^d$.
Follow-up works refining the analysis also adopted this assumption \citep{bernstein2019signsgd,safaryan2019stochastic}, which we will refer to as \emph{separable} smoothness to emphasize that the quadratic term \emph{separates} over coordinates with individual constants $l_i$.

Reiterating all existing convergence results would be beyond the scope of this paper, but it is crucial to understand that the convergence rates of sign-based methods are governed by $L_\infty$ in the papers based on $\ell_\infty$-smoothness and by $\sum_i l_i$ in those based on the separable smoothness assumption.

\paragraph{Contributions}

The separable smoothness assumption seems to add an unnecessary level of granularity since neither the algorithm itself nor its analysis uses the individual values $l_i$.
This paper clarifies the relationship between separable smoothness and $\ell_\infty$-smoothness.
We show that the convergence results based on separable smoothness also hold under $\ell_\infty$-smoothness if $L_\infty = \sum_i l_i$ and that the latter is a strictly weaker assumption.
This unifies all existing results on sign-based methods under the umbrella of $\ell_\infty$-smoothness.

We then proceed to analyze the geometric meaning of $\ell_\infty$-smoothness.
We tie the corresponding smoothness constant $L_\infty$ to properties of the Hessian and show that it is favorable if the Hessian fulfills two conditions: (i) some degree of ``diagonal concentration'' and (ii) the maximal eigenvalue being much larger than the average eigenvalue.
Notably, these properties have repeatedly been observed in deep learning training tasks.
Our analysis thus provides a possible explanation of the empirical success of sign-based methods---and by extension Adam---in deep learning.
The dependence on the diagonal concentration of the Hessian, which relates to the axis-alignment of the objective, is in stark contrast to the Euclidean smoothness constant $L_2$, which controls the convergence speed of (stochastic) gradient descent.

\section{Sign Gradient Descent and Adam}
\label{sec:signgd_and_adam}

\begin{figure}
    \centering
    \includegraphics[width=\columnwidth]{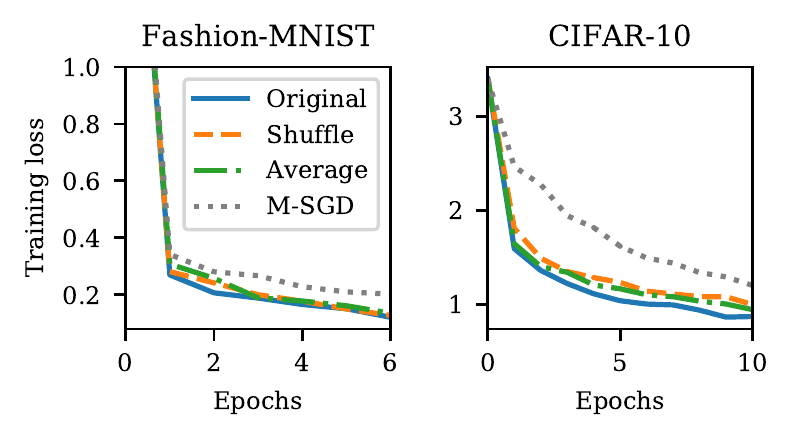}
    \caption{Original Adam compared to two variants based on the decomposition in Eq.~\eqref{eq:Adam_decomposition}.
    The factors $\boldsymbol{\gamma}_t$ are (i) randomly \emph{shuffled} or (ii) \emph{averaged} across coordinates.
    We add SGD with momentum for comparison.
    Both variants perform very similarly to original Adam, suggesting that Adam is primarily a sign-based method and the elementwise adaptivity plays a secondary role.
    Experimental details may be found in Appendix~\ref{apx:experimental_details}.}
    \label{fig:shuffle_Adam_mnist}
\end{figure}
This work primarily aims to provide a deeper understanding of sign-based optimization methods, but these insights may also be pertinent to the very popular Adam optimizer \citep{kingma2014adam}, which has been tied to signSGD by multiple authors \citep{balles2018dissecting, bernstein2018signsgd}.
As additional motivation for or study of sign-based methods, we briefly review this connection and provide additional corroborating evidence.

Adam maintains moving averages of stochastic gradients, $\mm_t  = \beta_1 \mm_{t-1} + (1-\beta_1) \g_t$, and their elementwise squares, $\vv_t = \beta_2 \vv_{t-1} + (1-\beta_2) \g_t^2$, and updates in the direction $-\mm_t / (\sqrt{\vv_t} + \varepsilon)$, where $\varepsilon>0$ is a small constant guaranteeing numerical stability.
\citet{bernstein2018signsgd} point out that Adam reverts to signSGD for $\beta_1, \beta_2,\varepsilon \rightarrow 0$.
\citet{balles2018dissecting} investigate the relationship more generally by rewriting Adam's update direction for $\varepsilon=0$ as
\begin{equation}
    \label{eq:Adam_decomposition}
    \frac{\mm_t}{\sqrt{\vv_t}} = \underbrace{ \left( 1 + \frac{\vv_t - \mm_t^2}{\mm_t^2} \right)^{-1/2}}_{=: \boldsymbol{\gamma}_t} \sign(\mm_t).
\end{equation}
Based on this decomposition,~\citet{balles2018dissecting} interpret Adam as a momentum version of signSGD with elementwise variance-based damping terms.\footnote{The term $\vv_t - \mm_t^2$ can be interpreted as an estimate of the elementwise variances of $\g_t$. Hence, each coordinate of $\boldsymbol{\gamma}_t$ is a damping factor in  $[0, 1]$ which is monotonically decreasing as a function of the gradient variance in that coordinate.}
Both \citet{balles2018dissecting} and \citet{bernstein2018signsgd} demonstrate that signSGD with momentum and Adam often have very similar practical performance on deep learning tasks.

We push the analysis further and experimentally investigate the relative importance of the variance-based damping ($\boldsymbol{\gamma_t}$ in Eq.~\ref{eq:Adam_decomposition}) and the sign aspect.
To that end, we compare standard Adam to two variants based on this decomposition:
\begin{itemize}[topsep=.2\topsep, itemsep=.2\itemsep, leftmargin=*]
    \item \emph{Shuffled}, where we use the update direction $\boldsymbol{\tilde{\gamma}}_t \sign(\mm_t)$ where $\boldsymbol{\tilde{\gamma}}_t$ is a vector that contains the elements of $\boldsymbol{\gamma}_t$ in a randomly shuffled order.
    \item \emph{Averaged}, where the update direction is $\bar{\gamma}_t \sign(\mm_t)$ where $\bar{\gamma}_t$ is the average value of $\boldsymbol{\gamma}_t$, resulting in momentum-signSGD with an adaptive scalar step size.\footnote{For ease of implementation, we actually shuffle/average separately for each ``variable'' (weight matrix, convolution filter, bias vector) since optimizers in deep learning frameworks are commonly implemented to act independently on each such variable.
}
\end{itemize}
Figure~\ref{fig:shuffle_Adam_mnist} depicts results for the training of simple CNN architectures on the Fashion-MNIST \citep{xiao2017fashion} and CIFAR-10 \citep{krizhevsky2009learning} datasets.
We see that both variants perform very similarly to the original Adam algorithm, corroborating the view that Adam is primarily a scaled variant of signSGD with momentum and that the elementwise adaptivity plays a secondary role.

\section{Separable Smoothness and \texorpdfstring{$\ell_\infty$}{linf}-Smoothness}
\label{sec:unifying}

After positioning sign-based methods in the context of other popular methods, we now clarify the relationship between separable smoothness and $\ell_\infty$-smoothness in order to unify existing convergence results.
We briefly review the concept of smoothness with respect to arbitrary norms and the associated steepest descent methods in Section~\ref{sec:smoothness_and_steepest_descent}.
Section~\ref{sec:replacing_separable_smoothness} details how $\ell_\infty$-smoothness can replace separable smoothness in the analysis of sign-based optimization methods.
We conclude with remarks on the consequences of this unification.

\subsection{Smoothness and Steepest Descent}
\label{sec:smoothness_and_steepest_descent}

Smoothness is a standard assumption in optimization and means that the gradient function is Lipschitz, i.e., $\Vert \nabla f(\yy) - \nabla f(\xx) \Vert_2 \leq L_2 \Vert \yy-\xx\Vert_2$ for some positive scalar $L_2$.
The crucial significance of this assumption is that it gives rise to local quadratic bounds on $f$:
\begin{equation}
    \label{eq:euclidean_smoothness_quadratic_bound}
    f(\yy) \leq f(\yy) + \langle \nabla f(\xx), \yy - \xx\rangle + \frac{L_2}{2} \Vert \yy - \xx\Vert_2^2.
\end{equation}
This bound motivates gradient descent; fixing $\xx$ and minimizing w.r.t.~$\yy$ yields the update $\yy = \xx - L_2^{-1} \nabla f(\xx)$.

\subsubsection{Smoothness w.r.t.~Arbitrary Norms}

The aforementioned notion of smoothness is based on the Euclidean norm and can be generalized to arbitrary norms.
We say $f$ is $L$-smooth w.r.t.~a norm $\Vert \cdot \Vert$ if
\begin{equation}
    \label{eq:general_smoothness}
    \Vert \nabla f(\yy) - \nabla f(\xx) \Vert_\ast \leq L \Vert \yy -\xx  \Vert
\end{equation}
for all $\xx, \yy\in\mathbb{R}^d$.
Here, $\Vert \cdot \Vert_\ast$ denotes the dual norm of $\Vert \cdot \Vert$, defined as $\Vert \zz \Vert_\ast \defas \max_{\Vert \xx \Vert\leq 1} \langle \zz, \xx\rangle$.
Table~\ref{tab:ggd_methods} lists the dual norm pairs for the methods under consideration in this paper.
The standard case of Euclidean smoothness falls under this definition since the Euclidean norm is dual to itself.

Due to the equivalence of norms on $\mathbb{R}^d$, a function that is smooth with respect to one norm is also smooth with respect to any other norm.
However, the tightest possible smoothness constant,
\begin{equation}
    \label{eq:tight_smoothness_constant}
    L \defas \sup_{\xx\neq \yy} \frac{\Vert \nabla f(\yy) - \nabla f(\xx)\Vert_\ast}{\Vert \yy-\xx\Vert},
\end{equation}
will depend on the choice of norm.
In the following, we will always assume $L$ to be given by Eq.~\eqref{eq:tight_smoothness_constant}.
This constant governs the convergence speed of the corresponding steepest descent method, which we will define next.

\subsubsection{Steepest Descent}

As in the Euclidean case, smoothness gives rise to a local quadratic bound on the function around a point $\xx$: 
\begin{lemma}
\label{lemma:general_smoothness_improvement_bound}
If $f$ is $L$-smooth w.r.t.~$\Vert\cdot\Vert$, then
\begin{equation}
    f(\yy) \leq f(\xx) + \langle \nabla f(\xx), \yy-\xx\rangle + \frac{L}{2} \Vert \yy-\xx \Vert^2
\end{equation}
for all $\xx, \yy\in\R^d$.
\end{lemma}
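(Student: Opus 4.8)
The plan is to reduce the multivariate inequality to a one-dimensional integral along the segment joining $\xx$ and $\yy$, and then control the integrand using the defining smoothness bound~\eqref{eq:general_smoothness} together with the duality between $\Vert\cdot\Vert$ and $\Vert\cdot\Vert_\ast$.

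First I would introduce the univariate function $\phi(t) \defas f(\xx + t(\yy - \xx))$ on $t\in[0,1]$, so that $\phi(0)=f(\xx)$ and $\phi(1)=f(\yy)$. Since $f$ is differentiable, the chain rule gives $\phi^\prime(t) = \langle \nabla f(\xx + t(\yy-\xx)), \yy-\xx\rangle$, and the fundamental theorem of calculus yields
\begin{equation}
f(\yy) - f(\xx) = \int_0^1 \langle \nabla f(\xx + t(\yy-\xx)), \yy-\xx\rangle \, dt .
\end{equation}
Subtracting the constant term $\langle \nabla f(\xx), \yy-\xx\rangle = \int_0^1 \langle \nabla f(\xx), \yy-\xx\rangle\, dt$ then expresses the quantity of interest as
\begin{equation}
f(\yy) - f(\xx) - \langle \nabla f(\xx), \yy-\xx\rangle = \int_0^1 \langle \nabla f(\xx + t(\yy-\xx)) - \nabla f(\xx), \yy-\xx\rangle \, dt .
\end{equation}

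Next I would bound the integrand. The key (and only substantive) step is to control the inner product by the dual-norm, Hölder-type inequality $\langle \zz, \boldsymbol{u}\rangle \le \Vert \zz\Vert_\ast \Vert \boldsymbol{u}\Vert$, which follows immediately from the definition $\Vert \zz\Vert_\ast = \max_{\Vert \boldsymbol{u}\Vert \le 1}\langle \zz, \boldsymbol{u}\rangle$ recalled above; the conceptual point is that one must use this bound rather than Cauchy--Schwarz so that the correct (here $\ell_\infty$) geometry enters. Applying it to $\zz = \nabla f(\xx + t(\yy-\xx)) - \nabla f(\xx)$ and $\boldsymbol{u} = \yy-\xx$, then invoking the smoothness assumption~\eqref{eq:general_smoothness} on the pair $\xx$ and $\xx + t(\yy-\xx)$, and finally using positive homogeneity of the norm, gives
\begin{equation}
\langle \nabla f(\xx + t(\yy-\xx)) - \nabla f(\xx), \yy-\xx\rangle \le L\,\Vert t(\yy-\xx)\Vert\,\Vert \yy-\xx\Vert = L\,t\,\Vert \yy-\xx\Vert^2 .
\end{equation}

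Finally I would integrate this bound over $t\in[0,1]$. Since $\int_0^1 L\,t\,\Vert\yy-\xx\Vert^2\,dt = \tfrac{L}{2}\Vert\yy-\xx\Vert^2$, the claim follows. I do not anticipate a genuine obstacle: this is the standard descent-lemma computation, and the only place where care is needed is keeping the two norms consistent---measuring the gradient difference in $\Vert\cdot\Vert_\ast$ and the displacement in $\Vert\cdot\Vert$---so that the duality inequality and the smoothness definition line up exactly.
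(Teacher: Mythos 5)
Your proposal is correct and follows essentially the same route as the paper's proof: parametrize the segment, apply the fundamental theorem of calculus, subtract the first-order term, bound the integrand via the dual-norm inequality $\langle \zz, \boldsymbol{u}\rangle \le \Vert \zz\Vert_\ast \Vert \boldsymbol{u}\Vert$ followed by the smoothness assumption, and integrate $\int_0^1 t\,dt = \tfrac{1}{2}$. The paper's proof is identical in structure (it cites its Lemma~\ref{lemma:dual_identities}(a) for the duality step, which you derive directly from the definition of the dual norm), so there is nothing to add.
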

\begin{proof}
All proofs may be found in Appendix~\ref{apx:proofs}.
\end{proof}
The steepest descent method with respect to the norm~$\Vert\cdot\Vert$ iteratively minimizes this upper bound:
\begin{equation}
    \label{eq:steepest_descent_def}
    \xx_{t+1} \in \argmin_{\xx\in\R^d} \left( \langle \nabla f_t, \xx- \xx_t\rangle  + \frac{L}{2} \Vert \xx- \xx_t\Vert^2 \right).
\end{equation}
This minimizer need not be unique, in which case steepest descent is to be understood as choosing \emph{any} solution.

\begin{table}[t]
    \centering
    \footnotesize
    \begin{tabular}{llll}
    \textbf{Method} & \textbf{Norm} & \textbf{Dual} &  \textbf{Update direction} \\
    \hline
    Gradient descent & $\Vert \cdot \Vert_2$ & $\Vert \cdot \Vert_2$ & $\nabla f$ \\
    SignGD & $\Vert \cdot \Vert_\infty$ & $\Vert \cdot \Vert_1$ & $\Vert \nabla f\Vert_1 \sign(\nabla f)$ \\
    Coordinate descent & $\Vert \cdot \Vert_1$ & $\Vert \cdot \Vert_\infty$ & $\nabla f_{i_\text{max}} \ee^{(i_\text{max})}$ \\
    Block-normlzd. GD & $\Vert \cdot\Vert^\mathcal{B}_\infty$ & $\Vert \cdot\Vert^\mathcal{B}_1$ & see Appendix~\ref{apx:steepest_descent}
    \end{tabular}
    \caption{A few steepest descent methods. The table lists the used norm $\Vert\cdot\Vert$, its dual $\Vert\cdot\Vert_\ast$, and the resulting update direction.
    Coordinate descent and block-normalized gradient descent are discussed in Appendix~\ref{apx:steepest_descent}.}
    \label{tab:ggd_methods}
\end{table}

We have already seen that gradient descent is steepest descent with respect to the Euclidean norm.
As noted by \citet{kelner2014almost} and \citet{carlson2015stochastic}, steepest descent with respect to the maximum norm gives rise to a version of sign gradient descent, namely
\begin{equation}
    \label{eq:signgd_with_norm}
    \xx_{t+1} = \xx_t - \frac{1}{L_\infty} \Vert \nabla f_t \Vert_1 \sign (\nabla f_t).
\end{equation}
This is equivalent to Eq.~\eqref{eq:signgd_without_norm} up to the scaling with the gradient norm which may be  subsumed in the step size.

\emph{Side note.} The steepest descent framework encompasses many other well-known methods, see Table~\ref{tab:ggd_methods}.
For example, steepest descent w.r.t.~the $\ell_1$-norm yields a version of coordinate descent.
An interesting observation is that a block-wise extension of sign gradient descent arises as steepest descent w.r.t.~a block-wise maximum norm.
Variants of block-wise normalization have recently found application in deep learning~\citep{yu2017block, ginsburg2019stochastic} with blocks corresponding to layers.
We discuss this further in Appendix~\ref{apx:steepest_descent}.

\subsubsection{Convergence of Steepest Descent}
\label{sec:ggd_convergence}
The convergence of steepest descent methods bases upon the following Lemma, which guarantees an improvement in function value in each step.
\begin{lemma}
\label{lemma:steepest_descent_improvement}
Let $f$ be $L$-smooth w.r.t.~$\Vert \cdot \Vert$.
Then steepest descent (Eq.~\ref{eq:steepest_descent_def}) satisfies
\begin{equation}
    f(\xx_{t+1}) \leq f(\xx_t) - \frac{1}{2L} \Vert \nabla f(\xx_t)\Vert_\ast^2.
\end{equation}
\end{lemma}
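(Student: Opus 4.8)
The plan is to combine the quadratic upper bound of Lemma~\ref{lemma:general_smoothness_improvement_bound} with the optimality of the steepest descent update. First I would instantiate Lemma~\ref{lemma:general_smoothness_improvement_bound} at $\yy = \xx_{t+1}$ and $\xx = \xx_t$, which yields
\begin{equation}
    f(\xx_{t+1}) \leq f(\xx_t) + \langle \nabla f_t, \xx_{t+1} - \xx_t\rangle + \frac{L}{2}\Vert \xx_{t+1} - \xx_t\Vert^2.
\end{equation}
The key observation is that the right-hand side, after subtracting $f(\xx_t)$, is precisely the objective minimized in the definition of the steepest descent step (Eq.~\eqref{eq:steepest_descent_def}), evaluated at its own minimizer $\xx_{t+1}$. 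Hence its value is no larger than that objective evaluated at any other trial point $\xx_t + \ss$, and it suffices to exhibit a single step $\ss$ whose objective value equals $-\tfrac{1}{2L}\Vert \nabla f_t\Vert_\ast^2$.

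Next I would choose the trial step to point along the direction that realizes the dual norm. By the definition $\Vert \zz\Vert_\ast = \max_{\Vert\xx\Vert\leq 1}\langle\zz,\xx\rangle$ and compactness of the unit ball in $\R^d$, there exists a unit-norm vector $\boldsymbol u$ (i.e.\ $\Vert \boldsymbol u\Vert=1$) with $\langle \nabla f_t, \boldsymbol u\rangle = \Vert \nabla f_t\Vert_\ast$. Setting $\ss = -\tfrac{\Vert \nabla f_t\Vert_\ast}{L}\,\boldsymbol u$ and substituting, the linear term contributes $-\tfrac{\Vert \nabla f_t\Vert_\ast^2}{L}$ and the quadratic term contributes $+\tfrac{\Vert \nabla f_t\Vert_\ast^2}{2L}$, so the objective at this step equals $-\tfrac{1}{2L}\Vert \nabla f_t\Vert_\ast^2$. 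Chaining this with the optimality argument gives the claimed descent inequality.

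The main obstacle, though modest, is the handling of the general norm geometry: one cannot simply differentiate $\langle\nabla f_t, \ss\rangle + \tfrac{L}{2}\Vert \ss\Vert^2$, because $\Vert\cdot\Vert$ need not be differentiable (e.g.\ $\ell_\infty$). The clean route is therefore the variational one above, relying on the dual-norm identity and the attainment of its maximizer rather than on first-order conditions. I would also emphasize that evaluating the trial objective---rather than computing the exact minimum---is enough, since optimality of $\xx_{t+1}$ already supplies the needed inequality; this avoids having to argue that $\ss$ is the true minimizer and sidesteps the non-uniqueness issue flagged after Eq.~\eqref{eq:steepest_descent_def}.
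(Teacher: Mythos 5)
Your proof is correct. It rests on the same two pillars as the paper's—Lemma~\ref{lemma:general_smoothness_improvement_bound} and the variational characterization of the dual norm—but the mechanics differ in a way worth noting. The paper writes the update via the steepest descent operator $\ggdop{\Vert\cdot\Vert}{\nabla f_t}$ and \emph{exactly evaluates} the quadratic model at the minimizer, which requires the operator identities of Lemma~\ref{lemma:dual_identities} (proved there via a scalar first-order condition and a separate two-sided argument). You instead only \emph{upper-bound} the minimum by exhibiting the single feasible step $\ss = -\tfrac{\Vert\nabla f_t\Vert_\ast}{L}\,\boldsymbol{u}$ with $\langle \nabla f_t, \boldsymbol{u}\rangle = \Vert\nabla f_t\Vert_\ast$ and $\Vert\boldsymbol{u}\Vert = 1$; this bypasses Lemma~\ref{lemma:dual_identities} entirely and needs only attainment of the dual-norm maximum (plus the trivial case $\nabla f_t = 0$, where $\ss=0$ suffices). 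Your route is the more elementary and self-contained one for this lemma alone, and it correctly handles non-uniqueness of the minimizer; the paper's route pays the up-front cost of Lemma~\ref{lemma:dual_identities} but then reuses those identities in several later proofs (Propositions~\ref{proposition:normalized_steepest_descent_convergence} and Theorem~\ref{thm:convergence_generalized_normalized_gradient_descent}), which is why it factors the argument that way. Note in passing that your trial-point construction is essentially the content of the paper's proof of Lemma~\ref{lemma:dual_identities}(c), so the two arguments coincide at their core.
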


This implies various convergence results, which we discuss in Appendix~\ref{apx:steepest_descent}.
Generally, all steepest descent methods will enjoy the same rate of convergence, but Lemma~\ref{lemma:steepest_descent_improvement} shows the significance of (i) the smoothness constant, which we want to be small, and (ii) the dual gradient norm, which we want to be large.
These two aspects will play a role when we compare sign gradient descent and gradient descent in Section~\ref{sec:gd_vs_signgd}.

\subsection{Separable Smoothness and \texorpdfstring{$\ell_\infty$}{linf}-Smoothness}
\label{sec:replacing_separable_smoothness}

We now show that $\ell_\infty$-smoothness can replace separable smoothness for the analysis of sign-based methods by showing that
\begin{itemize}[topsep=.2\topsep, itemsep=.2\itemsep]
    \item[(i)] separable smoothness with constants $l_1,\dotsc, l_d$ implies $\ell_\infty$-smoothness with constant $L_\infty=\sum_i l_i$, and
    \item[(ii)] convergence results based on separable smoothness also hold under the latter, weaker assumption.
\end{itemize}

While separable smoothness is directly defined by Eq.~\eqref{eq:separable_smoothness_def} in existing works, it is easily embedded in the framework of Section~\ref{sec:smoothness_and_steepest_descent} as $1$-smoothness w.r.t.~the norm $\Vert\cdot\Vert_{\LL}$ where $\LL=\diag(l_1, \dotsc, l_d)$ and $\Vert \zz \Vert_{\LL}^2 \defas \left( \sum_i l_i z_i^2 \right)$.
The bound given by Lemma~\ref{lemma:general_smoothness_improvement_bound} then coincides with Eq.~\eqref{eq:separable_smoothness_def}.
With that, we can establish statement (i).
\begin{proposition}
\label{proposition:separable_smoothness_implies_linf}
If $f$ is $1$-smooth w.r.t.~$\Vert\cdot\Vert_{\LL}$, then $f$ is $(\sum_i l_i)$-smooth w.r.t.~the maximum norm.
\end{proposition}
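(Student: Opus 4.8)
The plan is to work directly with the tightest smoothness constant from Eq.~\eqref{eq:tight_smoothness_constant}, which for the maximum norm (whose dual is the $\ell_1$-norm) reads $L_\infty = \sup_{\xx\neq\yy} \Vert \boldsymbol{r}\Vert_1 / \Vert \boldsymbol{h}\Vert_\infty$, where I abbreviate $\boldsymbol{r} \defas \nabla f(\yy)-\nabla f(\xx)$ and $\boldsymbol{h} \defas \yy-\xx$. The hypothesis of $1$-smoothness w.r.t.~$\Vert\cdot\Vert_{\LL}$ says exactly that $\Vert\boldsymbol{r}\Vert_{\LL,\ast} \le \Vert\boldsymbol{h}\Vert_{\LL}$ for every pair $\xx,\yy$. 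It therefore suffices to establish the chain $\Vert\boldsymbol{r}\Vert_1 \le c_1 \Vert\boldsymbol{r}\Vert_{\LL,\ast} \le c_1 \Vert\boldsymbol{h}\Vert_{\LL} \le c_1 c_2 \Vert\boldsymbol{h}\Vert_\infty$ with $c_1 c_2 = \sum_i l_i$; the middle inequality is the hypothesis, and the two outer inequalities are pure norm-equivalence estimates that do not involve $f$ at all.

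The first ingredient I need is the dual norm of $\Vert\cdot\Vert_{\LL}$. Since $\LL=\diag(l_1,\dots,l_d)$ with all $l_i>0$, we can write $\Vert\zz\Vert_{\LL} = \Vert\LL^{1/2}\zz\Vert_2$, and the change of variables $\xx=\LL^{-1/2}\ss$ in the definition $\Vert\zz\Vert_{\LL,\ast} = \max_{\Vert\xx\Vert_{\LL}\le 1}\langle\zz,\xx\rangle$ reduces the task to maximizing $\langle\LL^{-1/2}\zz,\ss\rangle$ over the Euclidean unit ball, yielding $\Vert\zz\Vert_{\LL,\ast} = \Vert\LL^{-1/2}\zz\Vert_2 = \big(\sum_i \zz_i^2/l_i\big)^{1/2}$.

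The two remaining bounds then follow from Cauchy--Schwarz. For the displacement, $\Vert\boldsymbol{h}\Vert_{\LL}^2 = \sum_i l_i h_i^2 \le \big(\sum_i l_i\big)\max_j h_j^2$, so $c_2 = \big(\sum_i l_i\big)^{1/2}$. For the gradient difference, I split $|r_i| = \big(|r_i|/\sqrt{l_i}\big)\sqrt{l_i}$ and apply Cauchy--Schwarz to obtain $\Vert\boldsymbol{r}\Vert_1 \le \big(\sum_i r_i^2/l_i\big)^{1/2}\big(\sum_i l_i\big)^{1/2} = c_1\Vert\boldsymbol{r}\Vert_{\LL,\ast}$ with $c_1=\big(\sum_i l_i\big)^{1/2}$. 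Multiplying gives $c_1 c_2 = \sum_i l_i$, which closes the chain and yields $L_\infty \le \sum_i l_i$, i.e., $\ell_\infty$-smoothness with constant $\sum_i l_i$.

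There is no serious obstacle here; the one point requiring care is choosing the Cauchy--Schwarz split for $\Vert\boldsymbol{r}\Vert_1$ so that the weights $1/l_i$ reproduce the dual norm $\Vert\cdot\Vert_{\LL,\ast}$ while the complementary factor collapses exactly to $\big(\sum_i l_i\big)^{1/2}$. It is worth noting that the constant is tight: the quadratic $f(\zz)=\tfrac12\zz^\top\LL\zz$ is itself exactly $1$-smooth w.r.t.~$\Vert\cdot\Vert_{\LL}$ (since $\LL^{-1/2}\LL\boldsymbol{h}=\LL^{1/2}\boldsymbol{h}$ turns the hypothesis into an equality), and evaluating it along the all-ones direction $\boldsymbol{h}=\mathbf{1}$ gives $\Vert\boldsymbol{r}\Vert_1 = \sum_i l_i = \big(\sum_i l_i\big)\Vert\boldsymbol{h}\Vert_\infty$. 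Hence $\sum_i l_i$ is the exact $\ell_\infty$-smoothness constant in the worst case rather than a loose overestimate, confirming the value asserted in the statement.
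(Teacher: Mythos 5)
Your proof is correct and takes essentially the same route as the paper's: identify the dual norm of $\Vert\cdot\Vert_{\LL}$ as the $1/l_i$-weighted Euclidean norm, then sandwich the hypothesis between the two norm-equivalence estimates $\Vert \zz\Vert_1 \leq \left(\sum_i l_i\right)^{1/2} \Vert \zz\Vert_{\LL^{-1}}$ (Cauchy--Schwarz with the split $\vert z_i\vert = (\vert z_i\vert/\sqrt{l_i})\sqrt{l_i}$) and $\Vert \zz\Vert_{\LL} \leq \left(\sum_i l_i\right)^{1/2} \Vert \zz\Vert_\infty$, exactly as the paper does. Your closing tightness example (the quadratic $f(\zz)=\tfrac12 \zz^T\LL\zz$ evaluated along the all-ones direction) is correct and is a nice addition not present in the paper, confirming that $\sum_i l_i$ is the exact worst-case constant.
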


Regarding statement (ii), we note that the separable smoothness assumption in form of  Eq.~\eqref{eq:separable_smoothness_def} enters existing convergence proofs exclusively with $\xx = \xx_t, \yy = \xx_{t+1}$.
The simple but essential observation is that, since sign-based updates have the same magnitude in each coordinate, $(\sum_i l_i)$-smoothness w.r.t.~the maximum norm yields the exact same bound.
\begin{proposition}
\label{proposition:linf_smoothness_replaces_separable_smoothness}
Let $\xx\in \mathbb{R}^d$, $\ss\in\{-1, 1\}^d$ and $\alpha>0$.
Both separable smoothness with constants $l_1,\dotsc, l_d$ and $\ell_\infty$-smoothness with constant $L_\infty = \sum_i l_i$ imply
\begin{equation}
    f(\xx + \alpha \ss) \leq f(\xx) + \alpha \langle \nabla f(\xx), \ss \rangle + \frac{\alpha^2}{2} \sum_i l_i.
\end{equation}
\end{proposition}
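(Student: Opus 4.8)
The plan is to handle both implications in parallel by evaluating the quadratic upper bound that each hypothesis supplies at the single displacement $\yy = \xx + \alpha\ss$, and then exploiting the fact that a sign vector has the same magnitude in every coordinate. First I would record the two bounds. Separable smoothness, via Eq.~\eqref{eq:separable_smoothness_def} with $\yy = \xx + \alpha\ss$, gives
\begin{equation*}
    f(\xx + \alpha\ss) \leq f(\xx) + \langle \nabla f(\xx), \alpha\ss\rangle + \tfrac{1}{2} \sum_i l_i (\alpha s_i)^2,
\end{equation*}
while $\ell_\infty$-smoothness with $L_\infty = \sum_i l_i$, through Lemma~\ref{lemma:general_smoothness_improvement_bound} applied at the same point, gives
\begin{equation*}
    f(\xx + \alpha\ss) \leq f(\xx) + \langle \nabla f(\xx), \alpha\ss\rangle + \tfrac{L_\infty}{2}\Vert \alpha\ss\Vert_\infty^2.
\end{equation*}

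The single observation that closes both cases is that $s_i \in \{-1,1\}$ forces $s_i^2 = 1$ for every $i$, so that all coordinates of the displacement $\alpha\ss$ share the squared magnitude $\alpha^2$. On the separable side this collapses $\sum_i l_i (\alpha s_i)^2$ to $\alpha^2 \sum_i l_i$. On the $\ell_\infty$ side it collapses $\Vert \alpha\ss\Vert_\infty^2 = (\max_i |\alpha s_i|)^2$ to $\alpha^2$, so that $\tfrac{L_\infty}{2}\Vert\alpha\ss\Vert_\infty^2 = \tfrac{\alpha^2}{2}\sum_i l_i$. Pulling the scalar $\alpha$ out of the inner product by bilinearity then reduces both right-hand sides to the common expression $f(\xx) + \alpha\langle\nabla f(\xx),\ss\rangle + \tfrac{\alpha^2}{2}\sum_i l_i$, which is exactly the claim.

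There is no genuine technical obstacle; the content is conceptual rather than computational, and the step worth emphasizing is precisely why the two a priori different quadratic terms agree. For a general displacement, $\sum_i l_i (\yy_i - \xx_i)^2$ is a weighted sum over all coordinates whereas $L_\infty\Vert\yy-\xx\Vert_\infty^2$ sees only the largest coordinate, so the two disagree. For a $\pm1$ step, however, every coordinate is simultaneously maximal in magnitude, and summing the weights $l_i$ and scaling by $\alpha^2$ becomes the same operation in both expressions. This structural feature of sign-based updates is exactly what makes $\ell_\infty$-smoothness an adequate substitute for separable smoothness in the convergence analysis.
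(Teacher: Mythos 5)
Your proposal is correct and follows essentially the same route as the paper's proof: instantiate the separable-smoothness bound of Eq.~\eqref{eq:separable_smoothness_def} and the bound of Lemma~\ref{lemma:general_smoothness_improvement_bound} at $\yy = \xx + \alpha\ss$, then use $s_i^2 = 1$ and $\Vert \ss\Vert_\infty = 1$ to see that both quadratic terms equal $\tfrac{\alpha^2}{2}\sum_i l_i$. The closing remark about why the two quadratic terms coincide only for sign steps matches the paper's framing of this observation as the essential point.
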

Since all existing convergence results start from exactly this bound they hold under either of the two assumptions.

In summary, separable smoothness adds a level of granularity that sign-based algorithms can not exploit and that is not necessary for their analysis.
Max-norm smoothness removes this unnecessary granularity, is a strictly weaker assumption, and is naturally tied to sign-based methods via the steepest descent formalism.
We thus argue that $\ell_\infty$-smoothness should be used for the analysis of sign-based methods.

\subsection{Consequences of the Unification}

The results established in the previous subsection relax the assumptions of some previous works and allow for a better comparability of existing results.
We can adapt results based on separable smoothness to use $\ell_\infty$-smoothness simply by replacing $\sum_i l_i$ with $L_\infty$.
As an example, Theorem~1 of \citet{bernstein2018signsgd} shows convergence to a stationary point for signSGD on non-convex problems in the ``large-batch setting'', where a mini-batch size of $T$ is used to perform $T$ iterations.
Adapted to $\ell_\infty$-smoothness, the rate reads
\begin{equation}
    \EE\left[\frac{1}{T} \sum_{t=0}^{T-1} \Vert \nabla f_t \Vert_1\right]^2 \leq \frac{1}{T}\left[\sqrt{L_{\infty}} \left(f_0 - f^\star + \frac{1}{2}\right) + C_1 \right]^2.
\end{equation}
with a constant $C_1$ depending on the level of gradient noise.
\citet{bernstein2018signsgd} contrast this with a rate for SGD achieved under similar assumptions, namely 
\begin{equation}
    \EE\left[\frac{1}{T} \sum_{t=0}^{T-1} \Vert \nabla f_t \Vert_2^2\right] \leq \frac{1}{T}\left[2 L_2 \left(f_0 - f^\star\right) + C_2 \right].
\end{equation}
For details refer to \citet{bernstein2018signsgd}.

This can now easily be compared with other results, for example, those arising from the steepest descent framework.
In the deterministic setting, norm-scaled sign gradient descent (Eq.~\ref{eq:signgd_with_norm}) achieves a non-convex rate of
\begin{equation}
    \frac{1}{T} \sum_{t=0}^{T-1} \Vert \nabla f_t \Vert_1^2 \leq \frac{ 2 L_\infty (f_0 - f^\star)}{T},
\end{equation}
whereas gradient descent achieves
\begin{equation}
\frac{1}{T} \sum_{t=0}^{T-1} \Vert \nabla f_t \Vert_2^2 \leq \frac{ 2 L_2 (f_0 - f^\star)}{T} \; ,
\end{equation}
see Proposition~\ref{proposition:steepest_descent_convergence_smooth} in Appendix~\ref{apx:steepest_descent}.

Our unification makes clear that the max-norm smoothness constant $L_\infty$ crucially affects the convergence speed of sign-based methods, irrespective of the setting (stochastic vs.~deterministic) and the precise version (Eq.~\ref{eq:signgd_without_norm} vs.~Eq.~\ref{eq:signgd_with_norm}).
It is thus critical to establish a better understanding of the geometric meaning of $\ell_\infty$-smoothness, which we will tackle in the next section.

\section{Understanding \texorpdfstring{$\ell_\infty$}{linf}-Smoothness}
\label{sec:understanding_linf}

The previous sections have established the significance of smoothness w.r.t.~the maximum norm for sign-based optimization methods, showing that the corresponding smoothness constant $L_\infty$ crucially affects their convergence speed.
We now turn our attention to the ``meaning'' of this constant.
While we have a good intuition for the Euclidean smoothness constant---an upper bound on the eigenvalues of the Hessian---this is lacking for smoothness w.r.t.~the maximum norm.
Our goal in this section is to understand which properties of the objective function affect the constant $L_\infty$.
We first introduce a Hessian-based formulation of general smoothness constants, generalizing the aforementioned result for Euclidean smoothness.
We then show that $L_\infty$ depends on both the eigenvalues of the Hessian as well as its degree of ``diagonal concentration'', which corresponds to the axis-alignment of the objective.
Next, we contrast this more explicitly with Euclidean smoothness, pinpointing conditions under which the objective function has a favorable $L_\infty$ constant \emph{relative} to $L_2$, suggesting a susceptibility to sign-based optimization methods.
Finally, we discuss how these insights relate back to the separable smoothness condition.

\subsection{Smoothness as a Bound on the Hessian}
\label{sec:smoothness_hessian_bound}

The definition of smoothness in Eq.~\eqref{eq:general_smoothness} is unwieldy.
For the Euclidean norm, a more intuitive characterization is widely known:
A twice-differentiable function $f$ is $L_2$-smooth w.r.t.~the Euclidean norm if and only if $\Vert \nabla^2 f(\xx)\Vert_2 \leq L_2$ for all $\xx\in\R^d$.
Here, $\Vert \cdot\Vert_2$ for matrices denotes the spectral norm, given by the largest-magnitude eigenvalue for symmetric matrices.
To facilitate our discussion of the $\ell_\infty$-smoothness constant, the following proposition generalizes this Hessian-based formulation of smoothness constants.
It shows that smoothness with respect to any other norm likewise arises from a bound on the Hessian, but in different matrix norms.
\begin{proposition}
\label{proposition:smoothness_from_hessian_bound}
For any norm $\Vert \cdot \Vert$ on $\mathbb{R}^d$, we define the matrix norm
\begin{equation}
    \label{eq:induced_matrix_norm}
    \Vert \HH \Vert \defas \max_{\Vert \xx\Vert\leq 1} \Vert \HH \xx \Vert_\ast \, .
\end{equation}
A twice-differentiable function $f\colon \R^d\to\R$ is $L$-smooth w.r.t.~$\Vert\cdot\Vert$ if and only if $\Vert \nabla^2 f(\xx)\Vert \leq L$ for all $\xx$.
\end{proposition}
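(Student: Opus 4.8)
The plan is to prove the two implications separately, noting first that the matrix norm in Eq.~\eqref{eq:induced_matrix_norm} is exactly the operator norm of $\HH$ regarded as a map from $(\R^d,\Vert\cdot\Vert)$ to $(\R^d,\Vert\cdot\Vert_\ast)$. The one algebraic fact I will use throughout is the induced bound $\Vert\HH\vv\Vert_\ast \le \Vert\HH\Vert\,\Vert\vv\Vert$ for all $\vv$, which follows from the definition by homogeneity.

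For the implication ``$\Vert\nabla^2 f(\xx)\Vert\le L$ everywhere $\Rightarrow$ $L$-smoothness,'' I would express the gradient difference as a line integral of the Hessian,
\begin{equation}
    \nabla f(\yy) - \nabla f(\xx) = \int_0^1 \nabla^2 f(\xx + t(\yy-\xx))\,(\yy-\xx)\,dt,
\end{equation}
and bound its dual norm. The key step is a triangle inequality for vector-valued integrals in the dual norm, $\Vert\int_0^1 \boldsymbol{u}(t)\,dt\Vert_\ast \le \int_0^1 \Vert\boldsymbol{u}(t)\Vert_\ast\,dt$, which is clean here precisely because $\Vert\cdot\Vert_\ast$ is a dual norm: writing $\Vert\zz\Vert_\ast = \max_{\Vert\boldsymbol{w}\Vert\le 1}\langle\zz,\boldsymbol{w}\rangle$ and exchanging the maximum with the integral gives the claim. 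Applying the induced bound to each integrand yields $\Vert\nabla^2 f(\xx + t(\yy-\xx))(\yy-\xx)\Vert_\ast \le L\Vert\yy-\xx\Vert$, and integrating over $t\in[0,1]$ reproduces Eq.~\eqref{eq:general_smoothness}.

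For the converse, I would fix $\xx$ and a direction $\vv$ with $\Vert\vv\Vert\le 1$ and invoke the definition of the Hessian as the derivative of the gradient, so that $\tfrac{1}{h}\big(\nabla f(\xx+h\vv) - \nabla f(\xx)\big) \to \nabla^2 f(\xx)\,\vv$ as $h\to 0$. For each nonzero $h$, $L$-smoothness bounds the dual norm of this difference quotient by $\tfrac{1}{|h|}L\Vert h\vv\Vert = L\Vert\vv\Vert \le L$. Passing to the limit and using continuity of $\Vert\cdot\Vert_\ast$ gives $\Vert\nabla^2 f(\xx)\,\vv\Vert_\ast \le L$; taking the supremum over $\Vert\vv\Vert\le 1$ then yields $\Vert\nabla^2 f(\xx)\Vert\le L$, as desired.

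The individual calculations are routine, and I expect the only delicate point to be the integral triangle inequality for a general norm, since the elementary scalar argument does not transfer directly. Handling it via the dual-norm representation above, as sketched, is what makes the argument go through cleanly for the dual norm $\Vert\cdot\Vert_\ast$ appearing in the smoothness definition.
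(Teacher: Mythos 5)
Your proposal is correct and follows essentially the same route as the paper: the forward direction uses the same line-integral representation of the gradient difference with the triangle inequality for vector-valued integrals and the induced bound $\Vert\HH\vv\Vert_\ast\le\Vert\HH\Vert\,\Vert\vv\Vert$, and your converse via the difference quotient $\tfrac{1}{h}(\nabla f(\xx+h\vv)-\nabla f(\xx))\to\nabla^2 f(\xx)\vv$ is the same limiting argument the paper carries out by writing that quotient as an averaged integral of the Hessian over $[0,\varepsilon]$. Your explicit justification of the integral triangle inequality via the dual-norm representation is a nice touch the paper leaves implicit.
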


We are not aware of prior works showing this for the general case.
For the Euclidean norm, Proposition~\ref{proposition:smoothness_from_hessian_bound} gives back the familiar spectral norm bound.

\subsection{\texorpdfstring{$L_\infty$}{Linf} is Sensitive to Axis-Alignment}

By Proposition~\ref{proposition:smoothness_from_hessian_bound}, $L_\infty$ is determined by
\begin{equation}
    \label{eq:matrix_norm_signgd}
    L_\infty = \sup_{\xx\in\R^d} \Vert \nabla^2 f(\xx)\Vert_{\infty, 1}, \quad
    \Vert \HH \Vert_{\infty, 1} \defas \max_{\Vert \xx\Vert_\infty\leq 1} \Vert \HH \xx \Vert_1.
\end{equation}
Unfortunately, computing $\Vert \HH\Vert_{\infty, 1}$ is NP-hard~\citep{rohn2000computing}, but we can gain some intuition on a two-dimensional quadratic where all relevant quantities are easily found in closed form (see Appendix~\ref{apx:example_2d_quadratic} for details).
As depicted in Fig.~\ref{fig:ellipses_linf}, $L_\infty$ does not only depend on the eigenvalues, but also on the axis-alignment of the objective, which is encoded in the diagonal concentration of the Hessian.

We can generalize this insight through the following upper bound.
\begin{proposition}
\label{proposition:linf_hessian_norm_upper_bound}
Let $\HH\in\R^{d\times d} $ be nonzero, positive semi-definite with eigenvalues $\lambda_1, \dotsc, \lambda_\text{d}$. Then
\begin{equation}
    \Vert \HH\Vert_{\infty, 1} \leq \rho_\text{diag}(\HH)^{-1} \sum_{i=1}^d \lambda_i
\end{equation}
with $\quad \rho_\text{diag}(\HH) \defas \sum_i \vert H_{ii} \vert / \sum_{i, j} \vert H_{ij} \vert$.
\end{proposition}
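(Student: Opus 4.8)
The plan is to derive the inequality from a single chain: bound the operator norm $\Vert\HH\Vert_{\infty,1}$ by the sum of absolute entries $\sum_{i,j}|H_{ij}|$, rewrite that sum via the definition of $\rho_\text{diag}$, and then use positive semi-definiteness to turn the diagonal sum into the trace (equivalently, the sum of eigenvalues).

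First I would establish $\Vert\HH\Vert_{\infty,1}\le\sum_{i,j}|H_{ij}|$. For any $\xx$ with $\Vert\xx\Vert_\infty\le 1$, the triangle inequality gives $\Vert\HH\xx\Vert_1=\sum_i\bigl|\sum_j H_{ij}x_j\bigr|\le\sum_{i}\sum_j|H_{ij}|\,|x_j|\le\sum_{i,j}|H_{ij}|$, where the final step uses $|x_j|\le 1$. Taking the maximum over the feasible $\xx$ gives the claimed bound. Note this inequality is typically \emph{strict}, since cancellations across rows prevent all row sums from being simultaneously saturated; the result is therefore an upper bound only, as intended for its role in the paper.

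Next I would invoke the definition of $\rho_\text{diag}$ to write $\sum_{i,j}|H_{ij}|=\rho_\text{diag}(\HH)^{-1}\sum_i|H_{ii}|$, and then use positive semi-definiteness twice: the diagonal entries of a PSD matrix are nonnegative, so $\sum_i|H_{ii}|=\sum_i H_{ii}=\trace(\HH)=\sum_i\lambda_i$. Substituting into the previous step yields $\Vert\HH\Vert_{\infty,1}\le\rho_\text{diag}(\HH)^{-1}\sum_i\lambda_i$, which is the claim.

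There is no genuine obstacle here---each step is an elementary inequality or identity. The only point meriting care is that $\rho_\text{diag}(\HH)^{-1}$ is well-defined: nonzeroness of $\HH$ makes the denominator $\sum_{i,j}|H_{ij}|$ positive, and a nonzero PSD matrix has $\trace(\HH)=\sum_i\lambda_i>0$, so the diagonal sum in the numerator is positive as well. Hence $\rho_\text{diag}(\HH)\in(0,1]$ and its reciprocal is finite, so the whole chain is legitimate.
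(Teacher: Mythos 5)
Your proof is correct and follows essentially the same route as the paper's: bound $\Vert\HH\Vert_{\infty,1}$ by $\sum_{i,j}|H_{ij}|$ via the triangle inequality, then rewrite that sum using $\rho_\text{diag}$ and the identity $\sum_i|H_{ii}|=\trace(\HH)=\sum_i\lambda_i$ for PSD matrices. The added remarks on well-definedness of $\rho_\text{diag}(\HH)^{-1}$ are a small bonus over the paper's version.
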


The quantity $\rho_\text{diag}(\HH) \in [d^{-1},1]$ measures the degree of diagonal concentration.
Proposition~\ref{proposition:linf_hessian_norm_upper_bound} thus indicates that $L_\infty$ will depend both on the axis-alignment of the Hessian as well as its eigenvalues.
This is in stark contrast to Euclidean smoothness, for which the relevant matrix norm is invariant to rotations.
We will make this comparison more explicit in the next subsection.

Proposition~\ref{proposition:linf_hessian_norm_upper_bound} only applies to positive semi-definite matrices. We provide a similar, albeit slightly looser, bound for all symmetric matrices:
\begin{proposition}
\label{proposition:linf_hessian_norm_upper_bound_extended}
Let $\HH\in\R^{d\times d}$ be symmetric with eigenvalues $\lambda_1, \dotsc, \lambda_d$ and orthonormal eigenvectors $\vv^{(1)},\dotsc, \vv^{(d)}$. Then
\begin{equation}
    \Vert \HH \Vert_{\infty, 1} \leq \sum_{i=1}^d \vert \lambda_i\vert \; \Vert \vv^{(i)}\Vert_1^2 \; .
\end{equation}
\end{proposition}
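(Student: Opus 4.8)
The plan is to work directly from the definition $\Vert \HH \Vert_{\infty,1} = \max_{\Vert \xx\Vert_\infty\le 1}\Vert \HH\xx\Vert_1$ in Eq.~\eqref{eq:matrix_norm_signgd} together with the spectral decomposition of $\HH$. Since $\HH$ is symmetric, I would write $\HH = \sum_{i=1}^d \lambda_i\, \vv^{(i)} (\vv^{(i)})^\top$, so that it suffices to bound $\Vert \HH\xx\Vert_1$ for an arbitrary fixed $\xx$ with $\Vert \xx\Vert_\infty\le 1$ and then take the supremum, exploiting that the eventual bound will not depend on $\xx$.

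First I would substitute the decomposition and apply the triangle inequality for the $\ell_1$-norm, splitting $\HH\xx$ into its rank-one contributions and pulling the scalar $\langle \vv^{(i)}, \xx\rangle = (\vv^{(i)})^\top \xx$ out of each term:
\begin{equation}
    \Vert \HH\xx\Vert_1 = \Bigl\Vert \sum_{i=1}^d \lambda_i\, \langle \vv^{(i)}, \xx\rangle\, \vv^{(i)} \Bigr\Vert_1 \le \sum_{i=1}^d |\lambda_i|\; \bigl|\langle \vv^{(i)}, \xx\rangle\bigr|\; \Vert \vv^{(i)}\Vert_1 .
\end{equation}
This already produces one factor of $\Vert \vv^{(i)}\Vert_1$, namely the $\ell_1$-length of the output direction $\vv^{(i)}$. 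The second factor comes from controlling the inner product by the duality of the $\ell_1$- and $\ell_\infty$-norms (cf.\ Table~\ref{tab:ggd_methods}): H\"older's inequality gives $\bigl|\langle \vv^{(i)}, \xx\rangle\bigr| \le \Vert \vv^{(i)}\Vert_1 \Vert \xx\Vert_\infty \le \Vert \vv^{(i)}\Vert_1$, where the last step uses $\Vert \xx\Vert_\infty\le 1$. Combining the two bounds yields $\Vert \HH\xx\Vert_1 \le \sum_i |\lambda_i|\,\Vert \vv^{(i)}\Vert_1^2$, and since the right-hand side is independent of $\xx$, taking the maximum over the unit $\ell_\infty$-ball completes the argument.

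There is no serious obstacle here; the proof is a direct combination of the triangle inequality and one application of H\"older. The only point requiring care is recognizing that the two copies of $\Vert \vv^{(i)}\Vert_1$ arise from distinct mechanisms---one from the $\ell_1$-norm of the rank-one output $\vv^{(i)}$ and one from the worst-case alignment of $\xx$ with $\vv^{(i)}$---so the squared norm is not an artifact but genuinely tracks both effects. It is also worth emphasizing why this bound is looser than the positive semi-definite bound of Proposition~\ref{proposition:linf_hessian_norm_upper_bound}: we bound each eigen-component separately and thus forgo any cancellation across components, while passing immediately to $|\lambda_i|$ ensures no eigenvalue signs are lost, which is precisely what allows the statement to cover all symmetric (rather than only positive semi-definite) matrices.
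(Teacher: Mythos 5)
Your argument is correct and is essentially identical to the paper's proof: both expand $\HH$ via its spectral decomposition, apply the triangle inequality to the $\ell_1$-norm of the sum of rank-one terms, and bound $\vert\langle \vv^{(i)},\xx\rangle\vert \le \Vert\vv^{(i)}\Vert_1$ using $\Vert\xx\Vert_\infty\le 1$. No differences worth noting.
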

In this bound, the dependency on the axis-alignment appears through $\Vert \vv^{(i)}\Vert_1^2$.
Since the eigenvectors have unit $\ell_2$-norm, their squared $\ell_1$-norm is bounded by $1 \leq \Vert \vv^{(i)}\Vert_1^2 \leq d$.
The lower bound is attained if $\vv^{(i)}$ is perfectly axis-aligned.
The upper bound is attained if all elements of $\vv^{(i)}$ are of equal magnitude.
Due to the weighting by $\vert \lambda_i \vert$, we see that the axis-alignment particularly matters for eigenvectors associated with eigenvalues of large magnitude.
Both bounds are tight for axis-aligned matrices but can be substantially loose for very non-axis-aligned ones.

These \emph{upper} bounds on $\Vert\HH\Vert_{\infty, 1}$ identify \emph{sufficient} conditions which favor sign gradient descent.
We note that the sensitivity to axis-alignment also manifests itself in a lower bound. Indeed, for any pair $(\lambda_i, \vv^{(i)})$ we have
\begin{equation}
\Vert \HH\Vert_{\infty, 1} \geq \frac{\Vert \HH\vv^{(i)}\Vert_1}{\Vert \vv^{(i)}\Vert_\infty} = \vert \lambda_i\vert \, \frac{\Vert \vv^{(i)}\Vert_1}{\Vert \vv^{(i)}\Vert_\infty}.
\end{equation}
Again, the ratio $\Vert\vv^{(i)}\Vert_1 / \Vert\vv^{(i)}\Vert_\infty\in [1, d]$ can be seen as a measure of axis-alignment of $\vv^{(i)}$.

\begin{figure*}
    \begin{subfigure}[c]{0.49\textwidth}
    \includegraphics{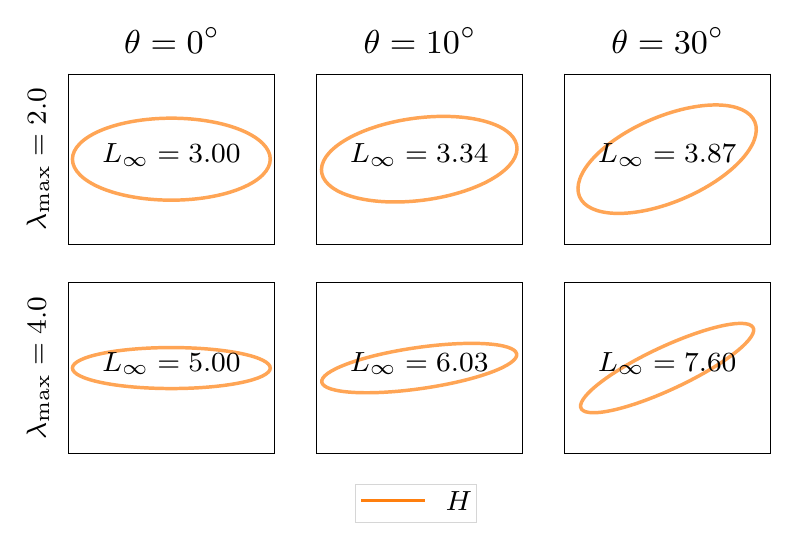}
    \caption{}
    \label{fig:ellipses_linf}
    \end{subfigure}
    \hfill
    \begin{subfigure}[c]{0.49\textwidth}
    \includegraphics{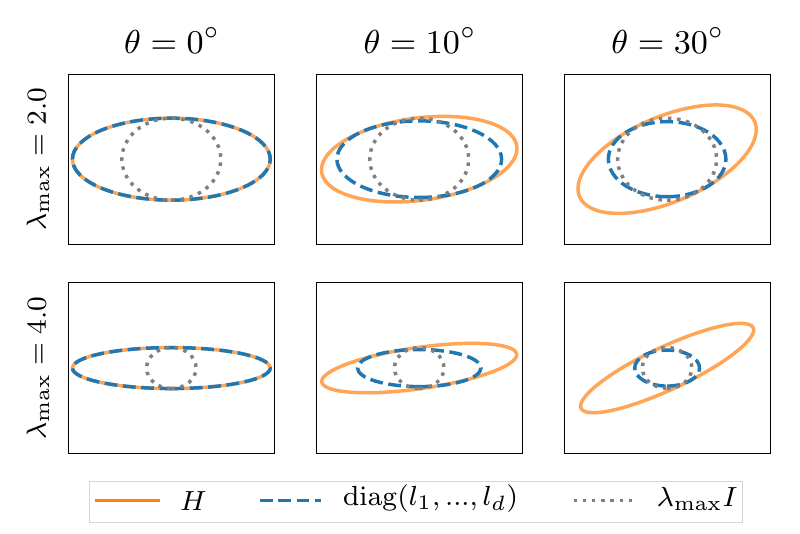}
    \caption{}
    \label{fig:ellipses_sep}
    \end{subfigure}
    \caption{For $\HH\in\R^{2\times 2}$, we plot a contour line of $f(\xx)=\frac{1}{2}\xx^T\HH\xx$, which forms an ellipse with principal axes given by the eigenvectors of $\HH$ and axis lengths given by the inverse eigenvalues.
    We fix $\lambda_\text{min}= 1$ and vary $\lambda_\text{max} > 1$ as well as the angle $\theta$ between the principal axes of the ellipse and the coordinate axes.
    (Mathematical details can be found in Appendix~\ref{apx:example_2d_quadratic}.)\\[1pt]
    (a) The $\ell_\infty$-smoothness constant $L_\infty=\Vert \HH\Vert_{\infty, 1}$ is sensitive to the axis-alignment of the objective.
    This is in contrast to the Euclidean smoothness constant, which is simply given by $L_2=\lambda_\text{max}$.\\[1pt]
    (b)
    Separable smoothness bounds $\HH$ by a diagonal matrix, $\diag(l_1,\dotsc, l_d) \succeq \HH$, corresponding to an axis-aligned ellipse that lies fully within the $\HH$-ellipse.
    The ``best'' bounding ellipse is given by Eq.~\eqref{eq:Lsep_def}.
    This bound changes with the axis-alignment, becoming both smaller and more circular (i.e., larger and more similar $l_i$) as we rotate further from the coordinate axes.
    In contrast to that, Euclidean smoothness bounds $\HH$ by $\lambda_\text{max} \II\succeq \HH$, i.e., a rotation-invariant circle.
    }
\end{figure*}

\subsection{Comparing \texorpdfstring{$L_\infty$}{Linf} and \texorpdfstring{$L_2$}{L2}}

We have seen that the $\ell_\infty$-smoothness constant governs the convergence of sign gradient descent (and related methods) whereas the Euclidean smoothness constant governs the convergence of gradient descent.
We now want to compare these two constants.
Assume $f$ to be $L_2$-smooth w.r.t.~$\Vert \cdot \Vert_2$ and $L_\infty$-smooth w.r.t.~$\Vert \cdot \Vert_\infty$ in the ``tight'' sense of Eq.~\eqref{eq:tight_smoothness_constant}.
Basic inequalities between the norms yield (see Appendix~\ref{apx:proofs})
\begin{equation}
    \label{eq:general_bound_linf_l2}
    L_2 \leq L_\infty \leq dL_2
\end{equation}
irrespective of $f$.
The smoothness constant governing the convergence speed of sign gradient descent will always be larger (worse) than the one pertinent to gradient descent.
This does \emph{not} mean that signGD is always worse than GD.
The smoothness constant is only one factor influencing the convergence speed; Lemma~\ref{lemma:steepest_descent_improvement} shows that the dual gradient norm plays a role as well.
As we will discuss in Section~\ref{sec:gd_vs_signgd}, this norm is larger for signGD which can make up for the disadvantage of a larger smoothness constant.

Here, we want to characterize under which circumstances $L_\infty$ \emph{tends} to be small---in particular much smaller than its worst case of $dL_2$---such that sign-based methods can be competitive with gradient descent.
Propositons~\ref{proposition:linf_hessian_norm_upper_bound} and \ref{proposition:linf_hessian_norm_upper_bound_extended} imply the following two conditions:
\begin{itemize}[topsep=.2\topsep, itemsep=.2\itemsep, leftmargin=*]
	\item Firstly, we need the Hessian to exhibit some degree of diagonal concentration or, equivalently, to have somewhat axis-aligned eigenvectors.
    \item Secondly, the sum of absolute eigenvalues should be much smaller than its worst case of $d \lambda_\text{max}$ or, equivalently, the average absolute eigenvalue should be much smaller than the maximal one:
    \begin{equation}
        \bar{\lambda} \defas \frac{1}{d} \sum_i \vert \lambda_i\vert \ll \max_i \vert \lambda_i \vert =: \lambda_\text{max}.
    \end{equation}
\end{itemize}

Notably, both properties have independently been found to be present in neural network training.
Multiple papers studying the Hessian in neural network training objectives \citep{chaudhari2016entropy, papyan2018full, ghorbani2019investigation, li2019hessian} find that the spectrum is dominated by a small number of \emph{outlier eigenvalues} that far exceed the bulk of eigenvalues, which are concentrated close to zero.
This is exactly the setting that will lead to $\bar{\lambda} \ll \lambda_\text{max}$.
The question of the diagonal concentration of the Hessian has been studied as early as \citeyear{becker1988improving} by \citeauthor{becker1988improving}.
More recently, \citet{adolphs2019ellipsoidal} have confirmed some degree of diagonal concentration in contemporary neural architectures.
In light of our analysis above, these observations suggest that the geometry of optimization problems encountered in deep learning lends itself to sign-based optimization methods (and its relatives such as Adam).

\subsection{Relationship to Separable Smoothness}

A natural question is how separable smoothness fits into this Hessian-based formulation and---given its close connection to $\ell_\infty$-smoothness---how it reflects the sensitivity to axis alignment.
\citet{bernstein2018signsgd} note that, for twice-differentiable $f$, separable smoothness results from a diagonal bound on the Hessian, i.e., $- \LL \preceq \nabla^2 f(\xx) \preceq \LL$ for all $x\in\R^d$ with $\LL =\diag(l_1,\dotsc, l_d)$.
It is tempting to think of the values $l_1,\dotsc, l_d$ merely as bounds on the eigenvalues of $\nabla^2 f(\xx)$, but we will see in the following that these values also depend on the axis-alignment of the Hessian.

With ``$\preceq$'' being only a partial ordering there is no clear definition of the \emph{tightest} diagonal bound.
Since the performance of sign-based methods depends on $\sum_i l_i$, we will choose the bound which minimizes this sum.
With that, we can establish the following result:
\begin{proposition}
\label{proposition:linf_lsep}
Let $\HH\in\R^{d\times d}$ be positive semi-definite with eigenvalues $\lambda_1\dotsc, \lambda_d$ and define
\begin{align}
    L_\infty(\HH) &\defas \max_{\Vert \xx\Vert_\infty\leq 1} \Vert \HH\xx \Vert_1, \label{eq:Linf_def} \\
    L_\text{sep}(\HH)  &\defas \min_{l_i \geq 0} \sum_i l_i \quad \text{s.t.} \quad \HH \preceq \diag(l_i). \label{eq:Lsep_def}
\end{align}
Then
\begin{equation}
    L_\infty(\HH) \leq L_\text{sep}(\HH) \leq \rho_\text{diag}(\HH)^{-1} \sum_i \lambda_i.
\end{equation}
\end{proposition}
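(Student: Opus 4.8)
The plan is to prove the two inequalities separately. For the first one, $L_\infty(\HH) \leq L_\text{sep}(\HH)$, I would fix an arbitrary diagonal matrix $\diag(l_i) \succeq \HH$ that is feasible for the minimization in Eq.~\eqref{eq:Lsep_def} and show $L_\infty(\HH) \leq \sum_i l_i$; taking the minimum over all feasible $l$ then gives the claim. The natural starting point is to rewrite the matrix norm through the dual characterization of the $\ell_1$-norm: since $\Vert \HH\xx \Vert_1 = \max_{\ss \in \{-1,1\}^d} \langle \ss, \HH\xx\rangle$, we have
\[
L_\infty(\HH) = \max_{\Vert \xx \Vert_\infty \leq 1} \Vert \HH\xx \Vert_1 = \max_{\ss \in \{-1,1\}^d,\ \Vert \xx \Vert_\infty \leq 1} \ss^T \HH \xx .
\]
The key step is then the polarization identity $2\,\ss^T \HH \xx = \ss^T \HH \ss + \xx^T \HH \xx - (\ss-\xx)^T \HH (\ss-\xx)$. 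Because $\HH$ is positive semi-definite, the last term is nonnegative and can be dropped, while the feasibility $\HH \preceq \diag(l_i)$ bounds the first two quadratic forms by $\sum_i l_i s_i^2 = \sum_i l_i$ (using $s_i^2 = 1$) and $\sum_i l_i x_i^2 \leq \sum_i l_i$ (using $l_i \geq 0$ and $x_i^2 \leq 1$), respectively. Halving yields $\ss^T \HH \xx \leq \sum_i l_i$ uniformly, which is exactly what is needed.

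For the second inequality, $L_\text{sep}(\HH) \leq \rho_\text{diag}(\HH)^{-1} \sum_i \lambda_i$, I would first \emph{simplify the right-hand side}. Since $\HH$ is positive semi-definite, its diagonal entries are nonnegative, so $\sum_i \lambda_i = \trace(\HH) = \sum_i H_{ii} = \sum_i \vert H_{ii}\vert$. Substituting the definition of $\rho_\text{diag}$ then collapses the bound to $\rho_\text{diag}(\HH)^{-1} \sum_i \lambda_i = \sum_{i,j} \vert H_{ij}\vert$. The task thus reduces to exhibiting a \emph{single} feasible diagonal bound whose trace equals $\sum_{i,j}\vert H_{ij}\vert$. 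The natural candidate is the absolute-row-sum (Gershgorin) choice $l_i \defas \sum_j \vert H_{ij}\vert$, for which $\sum_i l_i = \sum_{i,j}\vert H_{ij}\vert$ holds by construction, so it only remains to check feasibility.

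To verify $\diag(l_i) \succeq \HH$, I would show that $\boldsymbol{M} \defas \diag(l_i) - \HH$ is positive semi-definite. Using positive semi-definiteness once more ($H_{ii} \geq 0$), its diagonal entries are $M_{ii} = \sum_{j\neq i}\vert H_{ij}\vert = \sum_{j\neq i}\vert M_{ij}\vert$, so $\boldsymbol{M}$ is a symmetric, weakly diagonally dominant matrix with nonnegative diagonal; by Gershgorin's circle theorem all its eigenvalues are nonnegative, hence $\boldsymbol{M} \succeq 0$. Combining the two parts gives the chain, and I note that together they also recover the bound of Proposition~\ref{proposition:linf_hessian_norm_upper_bound}. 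I expect the conceptual crux to be the first inequality: the polarization trick (turning the bilinear form $\ss^T \HH \xx$ into quadratic forms that the diagonal constraint can control) is the nonroutine idea, whereas the second inequality is a clean Gershgorin argument once the right-hand side has been rewritten as $\sum_{i,j}\vert H_{ij}\vert$.
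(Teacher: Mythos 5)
Your proposal is correct and follows essentially the same route as the paper: the bilinear rewriting $L_\infty(\HH)=\max_{\Vert\ss\Vert_\infty,\Vert\xx\Vert_\infty\le 1}\ss^T\HH\xx$ combined with your polarization identity is exactly the paper's step of expanding $0\le(\hat\xx-\hat\yy)^T\HH(\hat\xx-\hat\yy)$ and bounding the two quadratic forms by $\sum_i l_i$, and the second inequality via the Gershgorin choice $l_i=\sum_j\vert H_{ij}\vert$ with diagonal dominance is identical. The only (harmless) cosmetic differences are that you quantify over an arbitrary feasible $\diag(l_i)$ rather than the minimizer, and restrict one argument to $\{-1,1\}^d$ via $\ell_1$-duality.
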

Hence, $L_\text{sep}$ upper bounds $L_\infty$, reflecting again that $\ell_\infty$-smoothness is the weaker of the two conditions.
At the same time, $L_\text{sep}$ is upper-bounded by the same quantity that appears in Proposition~\ref{proposition:linf_hessian_norm_upper_bound}, reflecting the sensitivity to axis-alignment.
Figure~\ref{fig:ellipses_sep} illustrates this on a two-dimensional quadratic example.

\emph{Side note.} The two-dimensional example also reveals another shortcoming of the analysis based on separable smoothness.
To contrast the performance of sign-based methods with that of (stochastic) gradient descent, it is assumed that the convergence speed of the latter is controlled by $l_\text{max}$, which implicitly assumes $l_\text{max} = \lambda_\text{max}$.
This may be misleading since $l_\text{max}$ can exceed $\lambda_\text{max}$, see Appendix~\ref{apx:example_2d_quadratic}.
Of course, we could deviate from the definition in Eq.~\eqref{eq:Lsep_def} and choose $l_i \equiv \lambda_\text{max}$ to guarantee $l_\text{max}=\lambda_\text{max}$, but the resulting $L_\text{sep}$ would be very unfavorable for sign(S)GD.

\section{Gradient Descent vs Sign Gradient Descent}
\label{sec:gd_vs_signgd}

We found in Section~\ref{sec:understanding_linf} that $L_\infty$ always exceeds $L_2$ but identified conditions under which $L_\infty\ll d L_2$.
In this section, we want to explicitly compare gradient descent and sign gradient descent.
To facilitate this comparison, we make two restrictions.
First, we consider the non-stochastic setting in order to isolate the dependency of the two methods on the \emph{geometry} of the objective rather than the stochasticity.
Second, we consider the norm-scaled version of sign gradient descent (Eq.~\ref{eq:signgd_with_norm}).
This variant is not usually applied to, say, deep learning tasks, for multiple possible reasons which we discuss in Appendix~\ref{apx:normalized_steepest_descent}.
We argue that, in the smooth, non-stochastic setting, the norm-scaled version is natural and preferable since (i) it arises as steepest descent w.r.t.~the maximum norm, and (ii) unlike the version in Eq.~\eqref{eq:signgd_without_norm}, it converges with a constant step size.

Hence, we are comparing two steepest descent methods.
By Lemma~\ref{lemma:steepest_descent_improvement}, their guaranteed improvement in function value at $\xx\in\R^d$ is given by
\begin{align}
    \mathcal{I}_\text{GD}(\xx) & \defas \frac{\Vert \nabla f(\xx) \Vert_2^2}{L_2}, \\ \mathcal{I}_\text{signGD}(\xx) & \defas \frac{\Vert \nabla f(\xx) \Vert_1^2}{L_\infty}.
\end{align}
We now compare the two methods based on these improvement guarantees, thus taking into account the dual gradient norm in addition to the smoothness constant.
Basic norm inequalities, $d \Vert\nabla f(\xx)\Vert_2^2 \geq \Vert \nabla f(\xx) \Vert_1^2 \geq \Vert\nabla f(\xx) \Vert_2^2$, show that this potentially favors sign gradient descent.

Following~\citet{bernstein2018signsgd} we define
\begin{equation}
    \phi(\zz) \defas \frac{\Vert \zz \Vert_1^2}{d \Vert \zz \Vert_2^2} \in [d^{-1}, 1],
\end{equation}
which measures the density of the vector $\zz$, i.e., how evenly its mass is distributed across coordinates.
With that, we can write
\begin{equation}
\label{eq:signgd_gd_improvement_ratio}
    \begin{split}
        \mathcal{R}(\xx) & \defas \frac{\mathcal{I}_\text{signGD}(\xx)}{\mathcal{I}_\text{GD}(\xx)} = \phi(\nabla f(\xx)) \frac{d L_2}{L_\infty}.
    \end{split}
\end{equation}
If in addition to $L_\infty \ll dL_2$, for which we have identified conditions in Section~\ref{sec:understanding_linf}, we encounter dense gradients, then $\mathcal{R}(\xx)$ is large and sign gradient descent makes faster progress than gradient descent.

We emphasize that this is a \emph{local} comparison at $\xx\in\R^d$ due to the dependence on the gradient $\nabla f(\xx)$.
It is tempting to try and lower-bound the gradient density---that is, to assume $\phi(\nabla f(\xx)) \geq \phi_g \gg d^{-1}$ for all $\xx$---in order to make global statements.
\citet{bernstein2018signsgd} assume such a lower bound when contrasting signGD and GD in their analysis.
However, $\phi(\nabla f(\xx))$ can easily be shown to attain $d^{-1}$ even on quadratics.
Any non-trivial lower bound would thus have to be restricted to the trajectory of the optimizer and take into account the initialization, which seems out of reach at the moment.
The effect of the gradient norm thus remains an empirical question which we now address.

\paragraph{Quadratic Experiments}

\begin{figure*}
    \centering
    \includegraphics{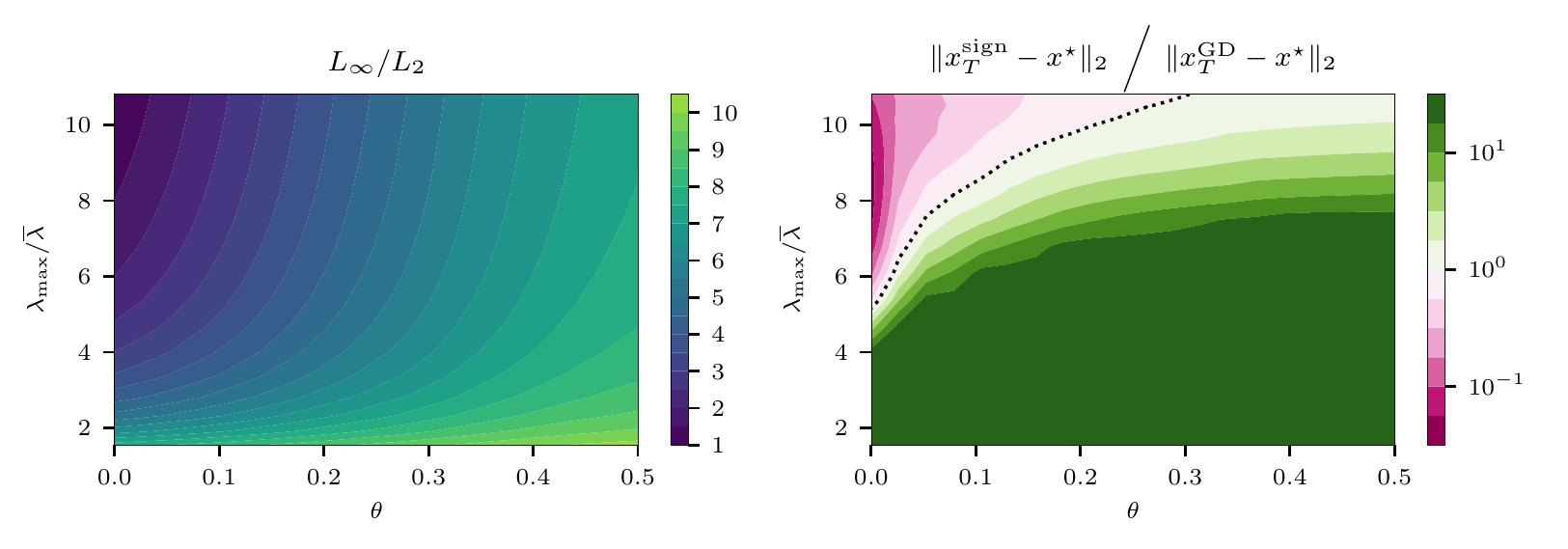}
    \caption{We consider quadratic objectives varying across two axes: $\lambda_\text{max} / \bar{\lambda} $ as well as a rotation value $\theta$.
    The left plot depicts the ratio of the two relevant smoothness constants.
    $L_\infty$ is sensitive to $\theta$ and grows relative to $L_2=\lambda_\text{max}$ as the problem becomes less axis-aligned.
    The right plot depicts the relative performance of gradient descent and sign gradient descent on these problems.
    GD drastically (the colormap is clipped) outperforms signGD for mildly-conditioned (small $\lambda_\text{max}/\bar{\lambda}$) and non-axis-aligned (large $\theta$) problems.
    However, sign gradient descent is preferable for problems with high $\lambda_\text{max}/\bar{\lambda}$, given that they have some degree of axis-alignment (small $\theta$). The dashed line represents equal performance of both algorithms.}
    \label{fig:quadratics}
\end{figure*}

We give a simple toy experiment to illustrate and verify the conditions under which signGD outperforms gradient descent.
We  consider synthetic quadratic problems of moderate dimension such that we can compute and control all relevant quantities.
We generate Hessians with varying $\bar{\lambda} / \lambda_\text{max}$ and axis alignment.
We set the eigenvalues as $\boldsymbol\Lambda = \diag(1, 1, \dotsc, 1, \lambda_\text{max})$.
To control the axis-alignment, we rotate the eigenvectors by some ratio $\theta$ in the direction prescribed by a randomly-drawn rotation matrix.
We can simply think of $\theta$ as a degree of rotation; the technical details can be found in Appendix~\ref{apx:experimental_details}.
For each Hessian, we compute the two smoothness constants $L_2 = \lambda_\text{max}$ and  $L_\infty = \Vert \HH \Vert_{\infty, 1}$.
We then run $T=100$ iterations of gradient descent (with $\alpha=1/L_2$) and sign gradient descent (with $\alpha=1/L_\infty$) and compute the distance to the optimum $\Vert \xx_T - \xx^\ast\Vert_2^2$ as a scale-invariant performance measure.
We average over repeated runs with $\xx_0\sim \mathcal{N}(0, \II)$ to marginalize out the effect of initialization.
The results are depicted in Fig.~\ref{fig:quadratics} and confirm the findings of Sections \ref{sec:understanding_linf} and \ref{sec:gd_vs_signgd}.
The $L_\infty$ constant---and consequently the performance of signGD---is sensitive to the axis-alignment of $\HH$ and suffers as we increase $\theta$.
For problems with $\lambda_\text{max} \gg \bar{\lambda}$ that are somewhat axis-aligned, sign gradient descent outperforms gradient descent, even on these simple quadratic problems.

\section{Conclusion}
\label{sec:conclusion}

In this paper, we made two main contributions.
First, we unified the assumptions of existing works on both batch and stochastic sign-based methods by clarifying the relationship between separable smoothness and $\ell_\infty$-smoothness.
The latter is the less restrictive, more natural assumption, and sufficient to support all existing results.
Second, by studying the corresponding smoothness constant $L_\infty$ we give a clear characterization of properties of the objective function which drive the performance of sign(S)GD and related methods, e.g., Adam.
We show that sign(S)GD may be preferable to (stochastic) gradient descent in the presence of \emph{outlier eigenvalues} ($\lambda_\text{max} \gg \bar{\lambda}$) given that the objective is somewhat axis-aligned.
Notably, these properties have independently been found to be present in neural network training tasks.
Hence, sign(S)GD need not necessarily be seen as a gradient compression scheme for distributed optimization, i.e., as an approximation of (S)GD, but it is a well-motivated optimization method in its own right.

Our approach of understanding sign(S)GD via the corresponding smoothness constant can be transferred to other methods.
In Appendix~\ref{apx:steepest_descent}, we briefly discuss the case of block-normalized gradient descent, which can be seen as a block-wise generalization of signGD.

Finally, we want to mention two other interesting aspects of sign-based methods, which we sidestepped in this work.
First, we focused on the geometric implications of sign-based methods, leaving aside the impact of gradient noise.
Since the magnitude of sign-based updates is bounded, one might hope to gain additional robustness to noise.
This aspect features somewhat implicitly in the work of~\citet{bernstein2018signsgd, bernstein2019signsgd} and \citet{safaryan2019stochastic}.
Relatedly, \citet{zhang2019adam} suggest that Adam has a certain robustness to heavy-tailed gradient noise.

Second, in practice, sign(S)GD is often used in the form of Eq.~\eqref{eq:signgd_without_norm} and not in the norm-scaled version arising in the steepest descent framework (Eq.~\ref{eq:signgd_with_norm}).
There are various possible reasons for this discrepancy, which we discuss in Appendix~\ref{apx:normalized_steepest_descent}.
In particular, we expand on the work of \citet{zhang2019analysis} who recently suggested that \emph{normalized} gradient descent is actually adapted to a certain ``relaxed smoothness'' condition.
We generalize this argument to signGD, providing a possible explanation for the aforementioned discrepancy.

\newpage
\subsubsection*{Acknowledgments}
The authors thank Hanie Sedghi and Frederik Kunstner for proofreading and helpful comments.
Lukas Balles kindly acknowledges the support of the International Max Planck Research School for Intelligent Systems (IMPRS-IS) as well as financial support by the European Research Council through ERC StG Action 757275 / PANAMA.

\bibliography{references}
\bibliographystyle{icml2020}

\newpage
\appendix

\onecolumn

\section*{---Supplementary Material---}
We now provide additional details and results for the paper. More precisely:
\begin{itemize}
\item Appendix~\ref{apx:steepest_descent} extends the discussion of steepest descent methods.
\item Appendix~\ref{apx:example_2d_quadratic} gives details on the two-dimensional quadratic example used in Section~\ref{sec:understanding_linf}.
\item Appendix~\ref{apx:normalized_steepest_descent} discusses normalized steepest descent methods as a possible explanation for the discrepancy between Eq.~\eqref{eq:signgd_without_norm} and Eq.~\eqref{eq:signgd_with_norm}.
\item Appendix~\ref{apx:experimental_details} provides details on the experiments.
\item All proofs, including those of results in the appendices, can be found in Appendix~\ref{apx:proofs}.
\end{itemize}

\section{Details on Steepest Descent}
\label{apx:steepest_descent}
In this section, we provide a more comprehensive overview of steepest descent methods including various convergence results.

\subsection{The Steepest Descent Operator}

Following earlier works on steepest descent methods \citep[e.g.][]{kelner2014almost}, it will be useful to re-write the steepest descent update (Eq.~\ref{eq:steepest_descent_def}) as
\begin{equation}
    \label{eq:def_steepest_descent_operator}
    \xx_{t+1} = \xx_t - \alpha_t \ggdop{\Vert\cdot\Vert}{\nabla f_t} \quad \text{with} \quad \ggdop{\Vert\cdot\Vert}{\zz} \in \argmax_{\yy\in\R^d} \left( \langle \zz, \yy \rangle - \frac{1}{2} \Vert \yy \Vert^2 \right) \; .
\end{equation}
The equivalence arises from substituting $\yy=-\frac{1}{\alpha_t}(\xx - \xx_t)$ in Eq.~\eqref{eq:steepest_descent_def}.
This allows to concisely express the steepest descent update direction.

\subsection{Additional Examples for Steepest Descent Methods}
\label{apx:further_ggd_examples}

We give two additional examples for steepest descent methods to demonstrate the versatility of this framework.

\subsubsection{Coordinate Descent}

Steepest descent with respect to the $L^1$-norm yields coordinate descent,
\begin{equation}
    \ggdop{\Vert\cdot\Vert_1}{\nabla f} = \vert \nabla f_{i_\text{max}} \vert \ee^{(i_\text{max})}
\end{equation}
where the selected coordinate is chosen as $i_\text{max} \in \argmax_{i\in [d]} \vert \nabla f_{t,i} \vert$ and $\ee^{(i)}$ denotes the $i$-th coordinate vector.
The corresponding smoothness assumption is
\begin{equation}
    \label{eq:l1_smoothness}
    \Vert \nabla f(\yy) - \nabla f(\xx)\Vert_\infty \leq L_1 \Vert \yy -\xx\Vert_1.
\end{equation}
In fact, this assumption implies the coordinate-wise Lipschitz smoothness assumption,
\begin{equation}
    \label{eq:coordinate_lipschitz}
    \vert \nabla f(\xx + h \ee^{(i)})_i - \nabla f(\xx)_i \vert \leq L_1 \vert h \vert \quad \forall i\in [d],
\end{equation}
which is widely-used in the literature on coordinate descent, since
\begin{equation}
    \begin{split}
        \vert \nabla f(\xx + h \ee^{(i)})_i - \nabla f(\xx)_i \vert & \leq \Vert \nabla f(\xx + h\ee^{(i)}) - \nabla f(\xx)\Vert_\infty \overset{\eqref{eq:l1_smoothness}}{\leq} L_1 \Vert \xx+h\ee^{(i)} - \xx \Vert_1 = L_1 \vert h \vert.
    \end{split}
\end{equation}


\subsubsection{Block-Normalized Gradient Descent}

Assume a block structure on $\mathbb{R}^d$ given by a partitioning $\mathcal{B}=\{B_1,\dotsc, B_b\}$ of $[d]$, with $B_k\subset [d]$, $B_k\cap B_l=\emptyset$ for $k\neq l$, and $\bigcup_k B_k = [d]$. For $B\subset [d]$, define $\xx_{B}\in\R^{\vert B\vert}$ to be the vector consisting of $(x_i)_{i\in B}$.
We can now define norms with respect to this block structure, such as
\begin{equation}
    \Vert \xx\Vert^\mathcal{B}_\infty = \max_{B\in \mathcal{B}} \Vert \xx_{B} \Vert_2 \quad \text{with dual norm} \quad
    \Vert \xx\Vert^\mathcal{B}_1 = \sum_{B\in\mathcal{B}} \Vert \xx_{B} \Vert_2.
\end{equation}
Steepest descent w.r.t.~$\Vert\cdot\Vert^\mathcal{B}_\infty$ results in \emph{block-normalized gradient descent},
\begin{equation}
    \ggdop{\Vert\cdot\Vert^\mathcal{B}_\infty}{\nabla f} = \Vert \nabla f\Vert^\mathcal{B}_1 \norm_\mathcal{B}(\nabla f), \quad \norm_\mathcal{B}(\zz) = \left( \frac{\zz_{B_1}^T}{\Vert \zz_{B_1}\Vert_2}, \dotsc, \frac{\zz_{B_b}^T}{\Vert \zz_{B_b}\Vert_2} \right)^T.
\end{equation}
This method is a block-wise equivalent of sign gradient descent, normalizing the update magnitude over blocks instead of elementwise.
Variants of this method have recently been studied empirically for neural network training~\citep{yu2017block, ginsburg2019stochastic} with the blocks corresponding to the weights and biases of individual layers.

We can analyze this method in a similar fashion as we did for signGD in the main paper.
The matrix norm determining the corresponding smoothness constant is given by
\begin{equation}
    \Vert \HH\Vert^\mathcal{B}_{\infty, 1} = \max_{\Vert \xx\Vert^\mathcal{B}_\infty \leq 1} \Vert \HH\xx\Vert^\mathcal{B}_1.
\end{equation}
We can provide an upper bound for this matrix norm analogous to Proposition~\ref{proposition:linf_hessian_norm_upper_bound}.

\begin{proposition}
\label{proposition:linf_block_hessian_norm_upper_bound}
Let $\HH\in\R^{d\times d} $ be nonzero, positive semi-definite and let a block structure be given by a partitioning $\mathcal{B}$. Then
\begin{equation}
    \Vert \HH\Vert^\mathcal{B}_{\infty, 1} \leq \rho^\mathcal{B}_\text{diag}(\HH)^{-1} \sum_{B\in\mathcal{B}} \lambda_\text{max}(\HH_{BB})
\end{equation}
with $\rho^\mathcal{B}_\text{diag}(\HH) \defas \frac{\sum_{BB^\prime} \Vert \HH_{BB^\prime}\Vert_2}{\sum_B \Vert \HH_{BB}\Vert_2}$.
\end{proposition}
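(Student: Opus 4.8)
The plan is to mirror the proof of Proposition~\ref{proposition:linf_hessian_norm_upper_bound}, replacing the scalar entries $H_{ij}$ by the submatrix blocks $\HH_{BB'}$ and the absolute values by spectral norms. The blockwise structure of the norms $\Vert\cdot\Vert^\mathcal{B}_\infty$ and $\Vert\cdot\Vert^\mathcal{B}_1$ makes this generalization essentially mechanical: the role played by the triangle inequality $|\sum_j H_{ij} x_j| \le \sum_j |H_{ij}||x_j|$ in the scalar case is taken over by the triangle inequality and submultiplicativity of the matrix $2$-norm.

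First I would fix an arbitrary $\xx$ with $\Vert\xx\Vert^\mathcal{B}_\infty \le 1$, which by definition means $\Vert\xx_B\Vert_2 \le 1$ for every block $B\in\mathcal{B}$. Writing the matrix--vector product blockwise as $(\HH\xx)_B = \sum_{B'\in\mathcal{B}} \HH_{BB'}\xx_{B'}$, the triangle inequality for $\Vert\cdot\Vert_2$ followed by submultiplicativity of the spectral norm gives $\Vert(\HH\xx)_B\Vert_2 \le \sum_{B'} \Vert\HH_{BB'}\Vert_2 \Vert\xx_{B'}\Vert_2 \le \sum_{B'}\Vert\HH_{BB'}\Vert_2$. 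Summing over $B$ yields $\Vert\HH\xx\Vert^\mathcal{B}_1 \le \sum_{B,B'}\Vert\HH_{BB'}\Vert_2$, and since this bound is uniform over the feasible set, taking the maximum gives the block analogue of the intermediate estimate in Proposition~\ref{proposition:linf_hessian_norm_upper_bound}, namely $\Vert\HH\Vert^\mathcal{B}_{\infty,1} \le \sum_{B,B'}\Vert\HH_{BB'}\Vert_2$.

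It then remains to rewrite this right-hand side in terms of $\rho^\mathcal{B}_\text{diag}$ and the diagonal-block eigenvalues, and this is where positive semi-definiteness is needed. Each diagonal block $\HH_{BB}$ is a principal submatrix of $\HH$, hence itself positive semi-definite, so $\Vert\HH_{BB}\Vert_2 = \lambda_\text{max}(\HH_{BB})$ and therefore $\sum_B\Vert\HH_{BB}\Vert_2 = \sum_B\lambda_\text{max}(\HH_{BB})$. Factoring out the diagonal-concentration ratio $\rho^\mathcal{B}_\text{diag}(\HH) = \sum_B\Vert\HH_{BB}\Vert_2 \big/ \sum_{B,B'}\Vert\HH_{BB'}\Vert_2$ turns the previous estimate into $\sum_{B,B'}\Vert\HH_{BB'}\Vert_2 = \rho^\mathcal{B}_\text{diag}(\HH)^{-1}\sum_B\lambda_\text{max}(\HH_{BB})$, which is exactly the claimed bound. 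As a sanity check, the singleton-block partition recovers Proposition~\ref{proposition:linf_hessian_norm_upper_bound} verbatim, since there $\Vert\HH_{BB'}\Vert_2 = |H_{ij}|$ and $\lambda_\text{max}(\HH_{BB}) = H_{ii}$.

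I do not anticipate a serious obstacle, as every step uses only standard norm inequalities. The only point requiring care is the blockwise decomposition, where one must apply the triangle inequality \emph{across} blocks and submultiplicativity \emph{within} each block in the correct order; both of these introduce slack, which is precisely the source of looseness of the bound for non-block-aligned Hessians. The essential use of the hypothesis is the identification $\Vert\HH_{BB}\Vert_2 = \lambda_\text{max}(\HH_{BB})$, which relies on the diagonal blocks inheriting positive semi-definiteness and explains why the statement is restricted to PSD matrices.
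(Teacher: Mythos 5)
Your proposal is correct and follows essentially the same route as the paper's proof: a blockwise triangle inequality plus the operator-norm bound $\Vert\HH_{BB'}\xx_{B'}\Vert_2 \le \Vert\HH_{BB'}\Vert_2\Vert\xx_{B'}\Vert_2$ to obtain $\Vert\HH\Vert^\mathcal{B}_{\infty,1} \le \sum_{B,B'}\Vert\HH_{BB'}\Vert_2$, followed by the observation that the diagonal blocks inherit positive semi-definiteness so that $\Vert\HH_{BB}\Vert_2 = \lambda_\text{max}(\HH_{BB})$, and finally factoring out $\rho^\mathcal{B}_\text{diag}(\HH)$. No gaps.
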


The quantity $\rho^\mathcal{B}_\text{diag}(\HH)$ measures the degree of concentration of $\HH$ on its block diagonal.
Without going into details, this allows us to reason about block-normalized gradient descent in a similar way as we did for sign gradient descent in the main paper.
In particular, we can identify conditions which favor block-normalized GD.
Firstly, this will require $\rho^\mathcal{B}_\text{diag}$ to be sufficiently large, i.e., some degree of concentration of the Hessian on the diagonal blocks.
Secondly, it requires a certain structure among the eigenvalues of the blocks.
With a slight abuse of notation, denote $\lambda_B = \lambda_\text{max}(\HH_{BB})$ and note that $\lambda_\text{max}(\HH) \geq \max_B \lambda_B$.
Then block-normalized GD will be favored relative to GD if
\begin{equation}
    \max_B \lambda_B \gg \frac{1}{\vert \mathcal{B}\vert} \sum_{B\in\mathcal{B}} \lambda_B
\end{equation}
given that $\rho^\mathcal{B}_\text{diag}(\HH)$ is not too small.

\subsection{Convergence Results for Steepest Descent}
\label{apx:further_convergence_results} 

Without further assumptions, smoothness guarantees convergence to a first-order stationary point.
\begin{proposition}
\label{proposition:steepest_descent_convergence_smooth}
If $f$ is $L$-smooth w.r.t.~$\Vert \cdot \Vert$, then steepest descent (Eq.~\ref{eq:steepest_descent_def}) satisfies
\begin{equation}
    \frac{1}{T} \sum_{t=0}^{T-1} \Vert \nabla f_t \Vert_\ast^2 \leq \frac{ 2 L (f_0 - f^\star)}{T}.
\end{equation}
\end{proposition}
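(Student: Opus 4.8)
The plan is to derive this averaged gradient-norm bound directly from the per-step descent guarantee of Lemma~\ref{lemma:steepest_descent_improvement} via a telescoping sum, which is the standard route for establishing non-convex convergence of descent-type methods.

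First I would invoke Lemma~\ref{lemma:steepest_descent_improvement} at each iterate $\xx_t$ to obtain the single-step decrease
\begin{equation}
    f(\xx_{t+1}) \leq f(\xx_t) - \frac{1}{2L}\Vert \nabla f_t\Vert_\ast^2 ,
\end{equation}
and rearrange it to isolate the squared dual gradient norm as a bound on the achieved decrease,
\begin{equation}
    \frac{1}{2L}\Vert \nabla f_t\Vert_\ast^2 \leq f(\xx_t) - f(\xx_{t+1}).
\end{equation}
Summing over $t = 0, \dotsc, T-1$ makes the right-hand side telescope, leaving only the first and last function values:
\begin{equation}
    \frac{1}{2L}\sum_{t=0}^{T-1}\Vert \nabla f_t\Vert_\ast^2 \leq f(\xx_0) - f(\xx_T) = f_0 - f(\xx_T).
\end{equation}

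Next I would use the standing assumption that $f$ is lower-bounded, with $f^\star = \inf_{\xx} f(\xx)$, so that $f(\xx_T) \geq f^\star$ and hence $f_0 - f(\xx_T) \leq f_0 - f^\star$. Multiplying through by $2L/T$ then yields exactly the claimed bound. There is no genuine obstacle here: the entire analytical content of the statement resides in the single-step improvement bound of Lemma~\ref{lemma:steepest_descent_improvement}, which we may assume, and the remainder is a purely mechanical telescoping-plus-boundedness manipulation. The only point deserving a moment of care is the direction of the final inequality, which relies on the lower-boundedness $f(\xx_T) \geq f^\star$ from the problem setup; note that the bound holds for \emph{any} solution of Eq.~\eqref{eq:steepest_descent_def} since the per-step lemma itself is stated for the steepest descent update irrespective of a possibly non-unique minimizer.
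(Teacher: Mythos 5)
Your proposal is correct and matches the paper's own proof essentially line for line: invoke Lemma~\ref{lemma:steepest_descent_improvement}, rearrange to bound $\Vert \nabla f_t\Vert_\ast^2$ by $2L(f_t - f_{t+1})$, telescope the sum, and bound $f(\xx_T)$ below by $f^\star$. No further comment is needed.
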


For smooth and convex functions,~\citet{kelner2014almost} showed $O(1/T)$ convergence in suboptimality. We restate this result here for completeness.
\begin{theorem}[Theorem 1 in~\citet{kelner2014almost}]
If $f$ is $L$-smooth w.r.t.~$\Vert \cdot \Vert$ and convex, then steepest descent (Eq.~\ref{eq:steepest_descent_def}) satisfies
\begin{equation}
    f_T - f^\star \leq \frac{2LR^2}{T+4} \quad \text{ with } \quad R \defas \max_{\xx \text{ s.t. } f(\xx) \leq f(\xx_0)} \: \min_{\xx^\star \text{ s.t. } f(\xx^\star) = f^\star} \Vert \xx - \xx^\star \Vert.
\end{equation}
\end{theorem}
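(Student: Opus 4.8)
The plan is to turn the single-step descent guarantee of Lemma~\ref{lemma:steepest_descent_improvement} into a recursion on the optimality gap $\delta_t \defas f(\xx_t) - f^\star$ and then analyze that recursion. The two ingredients beyond the smooth (non-convex) case are convexity, which lets me lower-bound the dual gradient norm by the gap, and the quantity $R$, which controls how far the iterates can wander from the set of minimizers.

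First I would record that Lemma~\ref{lemma:steepest_descent_improvement} gives $\delta_{t+1}\le \delta_t - \frac{1}{2L}\Vert\nabla f_t\Vert_\ast^2$, so in particular $f$ is non-increasing along the iterates and every $\xx_t$ lies in the sublevel set $\{\xx : f(\xx)\le f(\xx_0)\}$. Hence the distance from $\xx_t$ to the optimal set is at most $R$. Next I would invoke convexity: writing $\xx^\star_t$ for the minimizer closest to $\xx_t$, convexity yields $\delta_t = f(\xx_t)-f(\xx^\star_t) \le \langle \nabla f_t, \xx_t-\xx^\star_t\rangle \le \Vert\nabla f_t\Vert_\ast\,\Vert\xx_t-\xx^\star_t\Vert \le R\,\Vert\nabla f_t\Vert_\ast$, where the middle step is the defining inequality of the dual norm. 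Rearranging gives $\Vert\nabla f_t\Vert_\ast^2 \ge \delta_t^2/R^2$, and substituting into the descent inequality yields the scalar recursion $\delta_{t+1} \le \delta_t - \frac{\delta_t^2}{2LR^2}$.

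It then remains to solve this recursion. Assuming $\delta_t>0$ (otherwise $\delta_T=0$ and the claim is trivial by monotonicity), dividing through and using $\delta_{t+1}\le\delta_t$ gives $\frac{1}{\delta_{t+1}} \ge \frac{1}{\delta_t} + \frac{1}{2LR^2}$, which telescopes to $\frac{1}{\delta_T} \ge \frac{1}{\delta_0} + \frac{T}{2LR^2}$. To recover the exact constant $T+4$ I would bound the initial gap: applying Lemma~\ref{lemma:general_smoothness_improvement_bound} at the minimizer $\xx^\star_0$, where $\nabla f(\xx^\star_0)=0$, gives $\delta_0 \le \frac{L}{2}\Vert\xx_0-\xx^\star_0\Vert^2 \le \frac{L}{2}R^2$, hence $\frac{1}{\delta_0}\ge \frac{2}{LR^2}$. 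Combining, $\frac{1}{\delta_T}\ge \frac{4+T}{2LR^2}$, i.e. $\delta_T\le \frac{2LR^2}{T+4}$.

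The genuinely delicate part is the bookkeeping around $R$ rather than any hard inequality: I must ensure the same $\xx^\star$ can serve both the convexity bound and the distance bound (it can, since the convexity inequality holds for \emph{every} minimizer and I simply select the nearest one), and that the iterates never leave the sublevel set so that $R$ is a valid distance bound at every step (guaranteed by the monotone descent from Lemma~\ref{lemma:steepest_descent_improvement}). Pinning down the precise ``$+4$'' additionally relies on the smoothness-at-the-optimum estimate $\delta_0\le \frac{L}{2}R^2$; everything else is the routine inversion of a quadratically decreasing recursion.
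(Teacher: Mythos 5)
Your proof is correct. The paper itself does not prove this statement---it only restates Theorem~1 of \citet{kelner2014almost} for completeness---so there is no in-paper argument to compare against, but your derivation is the standard one for smooth convex steepest descent and every step checks out: the one-step decrease from Lemma~\ref{lemma:steepest_descent_improvement}, the monotonicity argument keeping the iterates in the sublevel set so that $R$ bounds $\Vert \xx_t - \xx^\star_t\Vert$ in the primal norm (matching the dual-norm pairing $\langle \nabla f_t, \xx_t-\xx^\star_t\rangle \le \Vert\nabla f_t\Vert_\ast\,\Vert\xx_t-\xx^\star_t\Vert$ from Lemma~\ref{lemma:dual_identities}(a)), the resulting recursion $\delta_{t+1}\le\delta_t-\delta_t^2/(2LR^2)$ inverted via $1/\delta_{t+1}-1/\delta_t\ge 1/(2LR^2)$, and the initial bound $\delta_0\le \tfrac{L}{2}R^2$ from Lemma~\ref{lemma:general_smoothness_improvement_bound} at a minimizer, which is exactly what produces the ``$+4$''.
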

The rate has a dependence on the initial distance to the nearest minimizer measured in the respective norm.

It is also straight-forward to show linear convergence in suboptimality under an additional assumption, known as the Polyak-{\L}ojasiewicz (PL) condition.
It is usually given for the Euclidean case as $\Vert \nabla f(\xx)\Vert_2^2 \geq 2\mu (f(\xx) - f^\star)$ but can likewise be formulated for arbitrary norms.
\begin{definition}
\label{def:pl}
A function $f\colon \mathbb{R}^d \to \mathbb{R}$ satisfies the PL condition with constant $\mu$ w.r.t.~a norm $\Vert \cdot \Vert$ if $\Vert \nabla f(\xx) \Vert_\ast^2 \geq 2\mu (f(\xx)- f^\star)$ for all $\xx\in\R^d$.
\end{definition}
We refer to this as PL with respect to $\Vert \cdot\Vert$ even though only the dual norm appears in the definition, since it is the natural counterpart to smoothness w.r.t.~$\Vert\cdot\Vert$ and used to prove linear convergence for steepest descent w.r.t.~$\Vert\cdot\Vert$.
As with smoothness, we have equivalence of the PL condition for all norms, but constants may differ.
Note that strong convexity implies the PL condition, but the class of PL functions also covers some non-convex functions.
\begin{proposition}
\label{proposition:steepest_descent_convergence_pl}
If $f$ is $L$-smooth and fulfills the PL condition with constant $\mu$ w.r.t~$\Vert \cdot \Vert$, then steepest descent (Eq.~\ref{eq:steepest_descent_def}) satisfies
\begin{equation}
    f_T - f^\star \leq \left( 1 - \frac{\mu}{L} \right)^T (f_0 - f^\star).
\end{equation}
\end{proposition}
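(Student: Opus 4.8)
The plan is to combine the per-step descent guarantee of Lemma~\ref{lemma:steepest_descent_improvement} with the PL condition to obtain a one-step geometric contraction on the suboptimality $f(\xx_t) - f^\star$, and then iterate. First I would invoke Lemma~\ref{lemma:steepest_descent_improvement}, which under $L$-smoothness gives
\[
f(\xx_{t+1}) \leq f(\xx_t) - \frac{1}{2L}\Vert \nabla f(\xx_t)\Vert_\ast^2 .
\]
Next I would substitute the PL bound from Definition~\ref{def:pl}, namely $\Vert \nabla f(\xx_t)\Vert_\ast^2 \geq 2\mu\,(f(\xx_t) - f^\star)$, to lower-bound the gradient term, yielding
\[
f(\xx_{t+1}) \leq f(\xx_t) - \frac{\mu}{L}\bigl(f(\xx_t) - f^\star\bigr) .
\]

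Subtracting $f^\star$ from both sides then produces the contraction
\[
f(\xx_{t+1}) - f^\star \leq \Bigl(1 - \frac{\mu}{L}\Bigr)\bigl(f(\xx_t) - f^\star\bigr) .
\]
Finally I would unroll this recursion over $t = 0, \dots, T-1$ (formally by a short induction on $T$) to arrive at the claimed bound $f_T - f^\star \leq (1 - \mu/L)^T (f_0 - f^\star)$.

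Since every step is an elementary substitution followed by a geometric-series/induction argument, I do not expect a genuine obstacle. The only point warranting care is checking that the contraction factor $1 - \mu/L$ lies in $[0,1)$, so the bound is meaningful as a true decrease. This follows by chaining the two hypotheses at a single point: Lemma~\ref{lemma:steepest_descent_improvement} together with $f^\star \leq f(\xx_{t+1})$ gives $\frac{1}{2L}\Vert \nabla f(\xx_t)\Vert_\ast^2 \leq f(\xx_t) - f^\star$, while the PL condition gives $\frac{\mu}{L}(f(\xx_t)-f^\star) \leq \frac{1}{2L}\Vert\nabla f(\xx_t)\Vert_\ast^2$; combining the two forces $\mu \leq L$. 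I would remark on this briefly to justify that the stated rate is a bona fide linear convergence guarantee.
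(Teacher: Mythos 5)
Your proof is correct and follows exactly the paper's argument: apply Lemma~\ref{lemma:steepest_descent_improvement}, insert the PL bound, subtract $f^\star$, and iterate the resulting contraction. The additional remark that $\mu \leq L$ (so the factor $1-\mu/L$ lies in $[0,1)$) is a sensible sanity check the paper omits, but it does not change the substance of the argument.
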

We are not aware of any published work showing this simple result in its general form.

\section{Two-Dimensional Quadratic Example}
\label{apx:example_2d_quadratic}

We give details on the two-dimensional quadratic example used in Section~\ref{sec:understanding_linf}.
We consider $f(\xx) = \frac{1}{2} \xx^T \HH \xx$ with positive definite Hessian $\nabla^2 f(\xx) \equiv \HH = \left[ \begin{smallmatrix} a & b \\ b & d \end{smallmatrix}\right]$. 
In this case, we can easily find closed-form expressions to $L_\infty(\HH)$ and $L_\text{sep}(\HH)$ in Proposition~\ref{proposition:linf_lsep}.

For $L_\infty$, we have $L_\infty (\HH) = \Vert \HH\Vert_{\infty, 1} = \max_{\Vert \xx\Vert_\infty\leq 1} \Vert \HH\xx \Vert_1 = \max_{\Vert \xx\Vert_\infty\leq 1} \left( \vert ax_1 + bx_2\vert + \vert bx_1 + d x_2\vert \right)$.
Recall that $a, d>0$ by positive definiteness of $\HH$.
By separating the two cases $b\geq 0$ and $b<0$ we find
\begin{equation}
    L_\infty(\HH) = a + d + 2 \vert b\vert.
\end{equation}

Hence, $L_\infty$ and $L_\text{sep}$ coincide in this case.
Furthermore, the upper-bound in Proposition~\ref{proposition:linf_lsep} is also tight, 
\begin{equation}
    \rho_\text{diag}(\HH)^{-1} \, \sum_i \lambda_i = a + d + 2 \vert b\vert,
\end{equation}
since $\sum_i \lambda_i = \trace(\HH) = a + d$ and $\rho_\text{diag}(\HH) = (a + d) / (a + d + 2 \vert b\vert)$.

\emph{Side note} This example also reveals another problem with the separable smoothness condition.
In the literature using the separable smoothness condition it is assumed that the convergence of GD is determined by $l_\text{max}$, which implicitly assumes $l_\text{max}=\lambda_\text{max}$.
The example above shows that this may be misleading since 
The eigenvalues of $\HH$ evaluate to
\begin{equation}
    \lambda_{1/2} = \frac{a+d}{2} \pm \sqrt{ \frac{(a-d)^2}{4} + b^2}.
\end{equation}
W.l.o.g. assume $d\geq a$.
Then we have 
\begin{equation}
    l_\text{max} = d + \vert b\vert \quad \text{ and } \quad
    \lambda_\text{max} = \frac{a+d}{2} + \sqrt{ \frac{(a-d)^2}{4} + b^2}
\end{equation}
We see that $\lambda_\text{max}$ can easily exceed $l_\text{max}$.
Of course, we could deviate from the definition of Eq.~\eqref{eq:Lsep_def} and choose $l_i \equiv \lambda_\text{max}$ to guarantee $l_\text{max}=\lambda_\text{max}$.
However, this bound would very unfavorable for sign(S)GD, whose performance depends on $\sum_i l_i$.

\section{On Normalized Steepest Descent}
\label{apx:normalized_steepest_descent}

There is a discrepancy between the version of sign gradient descent arising as steepest descent w.r.t.~maximum norm, $\xx_{t+1} = \xx_t - \alpha \Vert \nabla f_t\Vert_1 \sign(\nabla f_t)$, and signSGD as used in neural network training, $\xx_{t+1} = \xx_t - \alpha_t \sign(\g_t)$, with a constant or manually decreasing step size sequence $\alpha_t$.
The norm-scaled version has actually been shown to be useful in the smooth, convex, non-stochastic setting; e.g., \citet{kelner2014almost} successfully apply it to solve max-flow problems in graphs.
Without the scaling by the gradient norm, we have a \emph{normalized} method with an update magnitude determined solely by the step size, independent of the gradient magnitude.
There are various possible reasons why this might be beneficial in settings like neural network training.

One possible explanation is that normalized methods address certain challenges of non-convex problems, e.g., by escaping from saddle points faster \citep{levy2016power} or converging to global minima for quasi-convex functions \citep{hazan2015beyond}.

Another rationale is that, in the stochastic optimization setting, a decreasing step size is needed anyway to enforce convergence.
Since $\Vert \nabla f_t\Vert_1$ is not available in the stochastic setting---and $\Vert \g_t\Vert_1$ may be a poor estimate---it might be easier to subsume a similar scaling effect in the manually-tuned step size schedule.

We want to add to this discussion in two ways.
First, we provide a basic convergence result for non-stochastic normalized steepest descent methods with a decreasing step size under the classical smoothness assumption, which we could not find in the literature.
While normalized methods are clearly suboptimal under that assumption, this provides at least a basic convergence guarantee.

Second, we expand on the work of \citet{zhang2019analysis}, who show that normalized gradient descent is adapted to a certain relaxed smoothness condition which might be a better description of the regularity exhibited by neural network training objectives, which are not smooth in the sense of Eq.~\eqref{eq:general_smoothness}.
By extending their reasoning to arbitrary norms, we provide a possible explanation for the success of normalized signGD (Eq.~\ref{eq:signgd_without_norm}).

\subsection{Convergence of Normalized Steepest Descent Under Classical Smoothness}

We show convergence to a first-order stationary point for normalized steepest descent
\begin{equation}
    \xx_{t+1} = \xx_t - \alpha_t \frac{\ggdop{\Vert\cdot\Vert}{\nabla f_t}}{\Vert \nabla f_t\Vert_\ast}
\end{equation}
with a decreasing step size schedule.
To see that this is indeed a \emph{normalized} method, recall from Lemma~\ref{lemma:dual_identities} that $\Vert \ggdop{\Vert\cdot\Vert}{\zz} \Vert = \Vert \zz \Vert_\ast$.
Normalized steepest descent w.r.t.~the maximum norm thus is sign gradient descent in the version of Eq.~\eqref{eq:signgd_without_norm}.

\begin{proposition}
\label{proposition:normalized_steepest_descent_convergence}
Let $f\colon\R^d\to\R$ be $L$-smooth w.r.t.~$\Vert\cdot\Vert$ and assume we perform normalized steepest descent updates
\begin{equation}
    \xx_{t+1} = \xx_t - \frac{\alpha_t}{L} \frac{\ggdop{\Vert\cdot\Vert}{\nabla f_t}}{\Vert \nabla f_t\Vert_\ast}
\end{equation}
with $\alpha_t = \frac{1}{\sqrt{t+1}}$.
Then
\begin{equation}
    \frac{1}{T} \sum_{t=0}^{T-1} \Vert \nabla f_t\Vert_\ast \leq \frac{L(f_0 - f_\star)}{\sqrt{T}} + \frac{\log(T+1)}{2\sqrt{T}} \, \overset{T\rightarrow\infty}{\longrightarrow}\, 0.
\end{equation}
\end{proposition}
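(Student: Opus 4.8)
The plan is to convert the smoothness assumption into a per-step descent inequality and then sum it, the key point being that \emph{normalization} renders the update magnitude independent of the gradient. First I would record the two identities satisfied by the steepest descent operator of Eq.~\eqref{eq:def_steepest_descent_operator}, namely $\Vert \ggdop{\Vert\cdot\Vert}{\zz}\Vert = \Vert \zz\Vert_\ast$ (Lemma~\ref{lemma:dual_identities}) and $\langle \zz, \ggdop{\Vert\cdot\Vert}{\zz}\rangle = \Vert\zz\Vert_\ast^2$. Both follow by maximizing $\langle\zz,\yy\rangle - \frac12\Vert\yy\Vert^2$: writing $\yy = r\boldsymbol{u}$ with $\Vert\boldsymbol{u}\Vert = 1$ and optimizing the direction gives $\Vert\zz\Vert_\ast$, after which optimizing the scalar $r$ yields $r = \Vert\zz\Vert_\ast$ and objective value $\frac12\Vert\zz\Vert_\ast^2$.

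Next I would set $\Delta_t \defas \xx_{t+1} - \xx_t = -\tfrac{\alpha_t}{L}\,\ggdop{\Vert\cdot\Vert}{\nabla f_t}/\Vert\nabla f_t\Vert_\ast$ and observe the crucial cancellation $\Vert\Delta_t\Vert = \alpha_t/L$, so the step length is independent of $\Vert\nabla f_t\Vert_\ast$. Plugging $\Delta_t$ into the quadratic upper bound of Lemma~\ref{lemma:general_smoothness_improvement_bound} and applying the inner-product identity to the linear term gives
\begin{equation}
f_{t+1} \le f_t - \frac{\alpha_t}{L}\Vert\nabla f_t\Vert_\ast + \frac{\alpha_t^2}{2L},
\end{equation}
which rearranges to $\alpha_t\Vert\nabla f_t\Vert_\ast \le L(f_t - f_{t+1}) + \tfrac{\alpha_t^2}{2}$ after multiplying by $L$.

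I would then sum over $t = 0,\dots,T-1$. The function-value differences telescope to $L(f_0 - f_T) \le L(f_0 - f^\star)$. Because the descent term now carries only a single power of $\Vert\nabla f_t\Vert_\ast$ weighted by the decaying $\alpha_t$, I cannot factor the step size out of the sum; instead I use the monotonicity of $\alpha_t = (t+1)^{-1/2}$ to replace each $\alpha_t$ on the left by its smallest value $\alpha_{T-1} = T^{-1/2}$, obtaining $\tfrac{1}{\sqrt T}\sum_t \Vert\nabla f_t\Vert_\ast$ on the left while leaving $\sum_t \alpha_t^2 = \sum_t (t+1)^{-1}$ untouched on the right. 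Dividing by $\sqrt T$ and bounding the harmonic sum $\sum_{t=0}^{T-1}(t+1)^{-1}$ by $\log(T+1)$ through an integral comparison gives exactly the two stated terms, both of order $\log(T)/\sqrt T \to 0$.

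The main obstacle, and the conceptual content of the proposition, is this decoupling step: verifying that normalization fixes $\Vert\Delta_t\Vert = \alpha_t/L$ while the descent term $\langle\nabla f_t,\Delta_t\rangle = -\tfrac{\alpha_t}{L}\Vert\nabla f_t\Vert_\ast$ still contains the full dual norm. The secondary care point is the summation: keeping $\sum_t\alpha_t^2$ logarithmic (rather than $O(\sqrt T)$) requires lower-bounding $\alpha_t$ by $T^{-1/2}$ only on the left-hand side, not inside the quadratic residual term. Finally, the degenerate case $\nabla f_t = 0$, where the update is formally $0/0$, is dealt with by the convention $\xx_{t+1} = \xx_t$, since such an iterate contributes nothing to the gradient-norm sum.
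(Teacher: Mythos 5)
Your proof is correct and follows essentially the same route as the paper's: apply the smoothness bound of Lemma~\ref{lemma:general_smoothness_improvement_bound} to the normalized step, use the identities of Lemma~\ref{lemma:dual_identities} to reduce the descent inequality to $f_{t+1}\le f_t - \frac{1}{L}(\alpha_t\Vert\nabla f_t\Vert_\ast - \frac{\alpha_t^2}{2})$, telescope, lower-bound $\alpha_t$ by $T^{-1/2}$ only on the gradient-norm side, and bound the harmonic sum by $\log(T+1)$. Your explicit handling of the $\nabla f_t = 0$ case is a small addition the paper omits, but otherwise the arguments coincide.
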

The proof may be found in Appendix~\ref{apx:proofs}.

\subsection{Relaxed Smoothness}
\label{sec:generalizing_relaxed_smoothness}

\citet{zhang2019analysis} discuss a ``soft'' version of normalized gradient descent method,
\begin{equation}
    \label{eq:normalized_gradient_descent}
    \xx_{t+1} = \xx_t - \alpha \frac{1}{\Vert \nabla f_t\Vert_2 + \beta} \nabla f_t.
\end{equation}
and show that it is better geared towards the type of regularity exhibited by neural network training objectives, which are not smooth in the sense of Eq.~\eqref{eq:general_smoothness}.
In particular, they show that this method is optimal under a certain ``relaxed'' smoothness assumption, which allows the curvature to grow with the gradient norm instead of bounding it globally as in classical smoothness.

Since they consider normalized gradient descent, their discussion is based on Euclidean geometry. We show here that their reasoning can be generalized to arbitrary norms, providing another possible explanation of the practical success of \emph{normalized} steepest descent methods, e.g., sign gradient descent without the scaling by $\Vert\nabla f_t\Vert_1$.

\subsubsection{Results of~\texorpdfstring{\citet{zhang2019analysis}}{Zhang 2019}}

The relaxed smoothness condition proposed by \citet{zhang2019analysis} reads
\begin{equation}
    \label{eq:euclidean_relaxed_smoothness}
    \Vert \nabla^2 f(\xx) \Vert_2 \leq L^{(0)} + L^{(1)} \Vert \nabla f(\xx)\Vert_2,
\end{equation}
where $\Vert \cdot \Vert_2$ for matrices denotes the spectral norm.
This allows the curvature to grow with the gradient norm, in contrast to classical smoothness, which demands a global bound on the Hessian.

This relaxed smoothness gives rise to normalized gradient descent since, as we will see later, it provides local quadratic bounds of the form
\begin{equation}
    f_{t+1} \leq f_t + \langle \nabla f_t, \xx_{t+1} - \xx_t\rangle + \frac{1}{2} (A + B \Vert \nabla f_t\Vert_2) \Vert \xx_{t+1} - \xx_t \Vert_2^2, \quad A, B\geq 0.
\end{equation}
This resembles the bound of Lemma~\ref{lemma:general_smoothness_improvement_bound}, but the quadratic term now scales with the gradient norm.
It is minimized by a normalized gradient descent update (Eq.~\ref{eq:normalized_gradient_descent}) with appropriately chosen $\alpha$ and $\beta$.

The main finding of \citet{zhang2019analysis} is that gradient descent can become arbitrarily slow for the class of functions satisfying this relaxed smoothness, whereas normalized gradient descent (Eq.~\ref{eq:normalized_gradient_descent}) retains an  $O(1/\varepsilon^2)$ rate of convergence to an $\varepsilon$-stationary point.

\subsubsection{Generalization to Arbitrary Norms}

In this section, we generalize the concept of relaxed smoothness to arbitrary norms, which will give rise to general normalized steepest descent methods.
We define relaxed smoothness w.r.t.~to some norm $\Vert\cdot\Vert$ analogously to the Euclidean case (Eq.~\ref{eq:euclidean_relaxed_smoothness}), but use the dual norm for the gradient and the induced matrix norm of Proposition~\ref{proposition:smoothness_from_hessian_bound} for the Hessian.
\begin{definition}
A function $f$ is called $(L^{(0)}, L^{(1)})$-smooth with respect to some norm $\Vert\cdot\Vert$ if
\begin{equation}
    \label{eq:generalized_relaxed_smoothness}
    \Vert \nabla^2 f(\xx) \Vert \leq L^{(0)} + L^{(1)} \Vert \nabla f(\xx)\Vert_\ast,
\end{equation}
where $\Vert\cdot\Vert$ for matrices is the norm defined in Eq.~\eqref{eq:induced_matrix_norm}.
\end{definition}
Under this smoothness assumption, we have the following local quadratic bound:
\begin{lemma}
\label{lemma:general_relaxed_smoothness_improvement_bound}
Assume $f$ is $(L^{(0)}, L^{(1)})$-smooth with respect to a norm $\Vert \cdot\Vert$. Then for $\xx, \yy\in\mathbb{R}^d$ with $\Vert \yy-\xx\Vert \leq \frac{1}{L^{(1)}}$,
\begin{equation}
    f(\yy) \leq f(\xx) + \langle \nabla f(\xx), \yy-\xx\rangle + \frac{1}{2} (5L^{(0)} + 4L^{(1)} \Vert \nabla f(\xx)\Vert_\ast ) \Vert \yy- \xx\Vert^2.
\end{equation}
\end{lemma}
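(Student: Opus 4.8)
The plan is to combine Taylor's theorem with a lemma showing that the dual gradient norm cannot grow too quickly along a short segment. I would parametrize the segment between $\xx$ and $\yy$ by $\xx(t) \defas \xx + t(\yy - \xx)$ for $t \in [0,1]$ and use the integral form of Taylor's theorem,
\begin{equation}
    f(\yy) = f(\xx) + \langle \nabla f(\xx), \yy - \xx\rangle + \int_0^1 (1-t)\, \langle \nabla^2 f(\xx(t))(\yy-\xx),\, \yy-\xx\rangle \, dt.
\end{equation}
To control the quadratic form in the remainder I would apply the dual-norm (H\"older) inequality $\langle \nabla^2 f(\xx(t))(\yy-\xx),\, \yy-\xx\rangle \leq \Vert \nabla^2 f(\xx(t))(\yy-\xx)\Vert_\ast\,\Vert\yy-\xx\Vert$, followed by the induced matrix norm of Eq.~\eqref{eq:induced_matrix_norm}, which gives $\Vert\nabla^2 f(\xx(t))(\yy-\xx)\Vert_\ast \leq \Vert\nabla^2 f(\xx(t))\Vert\,\Vert\yy-\xx\Vert$; altogether the integrand is at most $\Vert\nabla^2 f(\xx(t))\Vert\,\Vert\yy-\xx\Vert^2$. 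Since $\int_0^1 (1-t)\, dt = \tfrac12$, the whole problem reduces to bounding $\Vert \nabla^2 f(\xx(t))\Vert$ uniformly over $t \in [0,1]$, and by the relaxed smoothness assumption~\eqref{eq:generalized_relaxed_smoothness} this in turn reduces to bounding $\Vert \nabla f(\xx(t))\Vert_\ast$ along the segment.

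The crux is therefore a gradient-growth estimate: I would show that $\Vert \nabla f(\xx(t))\Vert_\ast \leq 4\big(\Vert \nabla f(\xx)\Vert_\ast + L^{(0)}/L^{(1)}\big)$ for all $t \in [0,1]$, using the hypothesis $r \defas \Vert \yy - \xx\Vert \leq 1/L^{(1)}$. Writing $g(t) \defas \Vert \nabla f(\xx(t))\Vert_\ast$ and starting from $\nabla f(\xx(t)) - \nabla f(\xx) = \int_0^t \nabla^2 f(\xx(u))(\yy-\xx)\, du$, the triangle inequality for the dual norm followed by the matrix-norm bound and~\eqref{eq:generalized_relaxed_smoothness} yields the integral inequality
\begin{equation}
    g(t) \leq g(0) + r \int_0^t \big( L^{(0)} + L^{(1)} g(u) \big)\, du.
\end{equation}
Applying Gr\"onwall's inequality --- equivalently, an integrating-factor argument on the dominating equation $G'(t) = r(L^{(0)} + L^{(1)} G(t))$ --- gives $g(t) + L^{(0)}/L^{(1)} \leq e^{L^{(1)} r t}\big( g(0) + L^{(0)}/L^{(1)}\big)$, and since $L^{(1)} r t \leq 1$ on the admissible range, the exponential factor is at most $e \leq 4$, which establishes the claimed estimate.

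Substituting this back, relaxed smoothness gives $\Vert \nabla^2 f(\xx(t))\Vert \leq L^{(0)} + L^{(1)} g(t) \leq 5 L^{(0)} + 4 L^{(1)} \Vert \nabla f(\xx)\Vert_\ast$ for every $t$; plugging this uniform bound into the Taylor remainder and using $\int_0^1 (1-t)\, dt = \tfrac12$ produces exactly the stated inequality. I expect the gradient-growth estimate to be the main obstacle, and within it the subtle point is that the dual norm $\Vert \cdot \Vert_\ast$ need not be differentiable, so one cannot directly differentiate $g(t)$; the remedy is to stay with the integral form above and apply Gr\"onwall to the integral inequality rather than to a differential one. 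The constants $5$ and $4$ are not sharp --- the $e$-factor could be retained for a slightly tighter statement --- but rounding up to integers yields a cleaner bound.
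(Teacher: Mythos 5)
Your proposal is correct and follows essentially the same route as the paper: Taylor's theorem for the second-order remainder, the H\"older/induced-matrix-norm chain to reduce to a bound on $\Vert\nabla^2 f\Vert$ along the segment, and a Gr\"onwall-based gradient-growth lemma giving the factor $4$ (the paper uses the mean-value form of the remainder and bounds $\exp(1)<3$, while you use the integral form with the $(1-t)$ weight and a uniform bound along the segment, but these are cosmetic differences). Your remark about avoiding differentiation of the dual norm by working with the integral inequality is exactly the care the paper's argument also takes.
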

This resembles the bound in Lemma~\ref{lemma:general_smoothness_improvement_bound}, but the quadratic term now scales with the gradient norm.
In analogy to steepest descent, we can now construct an optimization method that minimizes this bound in each step.
Using Lemma~\ref{lemma:general_relaxed_smoothness_improvement_bound} with $\xx=\xx_t$ and minimizing w.r.t.~$\yy$ yields
\begin{equation}
    \label{eq:normalized_steepest_descent}
    \xx_{t+1} = \xx_t - \frac{1}{(5L^{(0)} + 4L^{(1)} \Vert \nabla f_t\Vert_\ast)} \ggdop{\Vert\cdot\Vert}{\nabla f_t}.
\end{equation}
This can be seen as a ``soft'' version of normalized steepest descent which reverts back to steepest descent in the vicinity of a stationary point, i.e., when $\Vert\nabla f_t\Vert_\ast$ becomes small.

We now show that the convergence theorem of~\citet{zhang2019analysis} carries over to this generalized setting.
The proofs (see Appendix~\ref{apx:proofs}) are straight-forward adaptations of that in~\citet{zhang2019analysis}, with a little bit of extra care with regards to the norms.
\begin{theorem}
\label{thm:convergence_generalized_normalized_gradient_descent}
Assume $f$ is $(L^{(0)}, L^{(1)})$-smooth with respect to a norm $\Vert \cdot\Vert$.
Then normalized steepest descent (Eq.~\ref{eq:normalized_steepest_descent}) converges to an $\varepsilon$-stationary point, $\Vert \nabla f\Vert_\ast \leq \varepsilon$, in at most
\begin{equation}
    T_\varepsilon = 18 (f_0 - f^\star) \max \left( \frac{L^{(0)}}{\varepsilon^2},  \frac{(L^{(1)})^2}{L^{(0)}} \right)
\end{equation}
iterations.
\end{theorem}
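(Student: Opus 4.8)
The plan is to follow the descent-lemma-plus-telescoping template of \citet{zhang2019analysis}, but carried out entirely in terms of the norm $\Vert\cdot\Vert$ and its dual. Throughout, write $G_t \defas \Vert\nabla f_t\Vert_\ast$ and $M_t \defas 5L^{(0)} + 4L^{(1)} G_t$, so that the update (Eq.~\ref{eq:normalized_steepest_descent}) reads $\xx_{t+1} = \xx_t - M_t^{-1}\, \ggdop{\Vert\cdot\Vert}{\nabla f_t}$. First I would record the two dual identities $\langle \nabla f_t, \ggdop{\Vert\cdot\Vert}{\nabla f_t}\rangle = G_t^2$ and $\Vert \ggdop{\Vert\cdot\Vert}{\nabla f_t}\Vert = G_t$, both of which follow from Lemma~\ref{lemma:dual_identities} together with the definition of the steepest descent operator as the $\argmax$ of $\langle\nabla f_t,\cdot\rangle - \tfrac12\Vert\cdot\Vert^2$. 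These immediately give $\Vert\xx_{t+1}-\xx_t\Vert = G_t/M_t \leq G_t/(4L^{(1)}G_t) = 1/(4L^{(1)}) \leq 1/L^{(1)}$, so that the step always lands inside the region where Lemma~\ref{lemma:general_relaxed_smoothness_improvement_bound} applies (if $\nabla f_t = 0$ then $G_t = 0 \leq \varepsilon$ and we are already at an $\varepsilon$-stationary point).

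Next I would substitute the update into the relaxed-smoothness quadratic bound of Lemma~\ref{lemma:general_relaxed_smoothness_improvement_bound}, whose quadratic coefficient at $\xx_t$ is exactly $\tfrac12 M_t$. Using the two identities above, the cross term contributes $-G_t^2/M_t$ and the quadratic term $\tfrac12 M_t \cdot (G_t/M_t)^2 = G_t^2/(2M_t)$, which combine to the clean per-step decrease
\begin{equation}
    f_t - f_{t+1} \geq \frac{G_t^2}{2M_t} = \frac{G_t^2}{2\left(5L^{(0)} + 4L^{(1)} G_t\right)} = h(G_t),
\end{equation}
where $h(G) \defas G^2 / \big(2(5L^{(0)} + 4L^{(1)}G)\big)$. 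The specific constants $5$ and $4$ in the method are precisely what make this cancellation yield $h(G_t)$ while simultaneously keeping the step inside the trust region. I would then note that $1/h(G) = 10L^{(0)}/G^2 + 8L^{(1)}/G$ is decreasing in $G>0$, hence $h$ is increasing on $[0,\infty)$.

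The complexity bound is then a stopping-time argument: suppose $G_t > \varepsilon$ for all $t < T$. By monotonicity of $h$, each step decreases the objective by at least $h(\varepsilon)$, so telescoping gives $f_0 - f^\star \geq f_0 - f_T \geq T\, h(\varepsilon)$, i.e. $T \leq (f_0 - f^\star)/h(\varepsilon)$. It remains to massage $1/h(\varepsilon) = 2\big(5L^{(0)}/\varepsilon^2 + 4L^{(1)}/\varepsilon\big)$ into the advertised $\max$-form. Here I would invoke the elementary inequality $\sqrt{ab}\leq \max(a,b)$ with $a = L^{(0)}/\varepsilon^2$ and $b = (L^{(1)})^2/L^{(0)}$, noting $\sqrt{ab} = L^{(1)}/\varepsilon$, to conclude $L^{(1)}/\varepsilon \leq \max(L^{(0)}/\varepsilon^2, (L^{(1)})^2/L^{(0)})$; bounding $L^{(0)}/\varepsilon^2$ by the same maximum trivially then yields $1/h(\varepsilon) \leq 2(5+4)\max(L^{(0)}/\varepsilon^2, (L^{(1)})^2/L^{(0)}) = 18\max(L^{(0)}/\varepsilon^2, (L^{(1)})^2/L^{(0)})$, which is exactly $T_\varepsilon$. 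I expect the main obstacle to be arranging the constants rather than anything conceptual: the delicate point is to use the $\sqrt{ab}\leq\max(a,b)$ trick to recover the \emph{precise} factor $18$ together with a clean $\max$ of the two natural regimes (gradient-dominated versus curvature-dominated), rather than the looser sum-form or threshold-splitting bounds that a more naive case analysis would produce.
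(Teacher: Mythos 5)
Your proof is correct and follows essentially the same route as the paper's: the per-step decrease $f_t - f_{t+1} \geq \Vert\nabla f_t\Vert_\ast^2 \big/ \bigl(2(5L^{(0)}+4L^{(1)}\Vert\nabla f_t\Vert_\ast)\bigr)$ obtained from Lemma~\ref{lemma:general_relaxed_smoothness_improvement_bound} and the dual identities of Lemma~\ref{lemma:dual_identities}, followed by telescoping over the iterations with $\Vert\nabla f_t\Vert_\ast > \varepsilon$. The only differences are in the bookkeeping: where the paper splits into the cases $\Vert\nabla f_t\Vert_\ast \leq L^{(0)}/L^{(1)}$ and $\Vert\nabla f_t\Vert_\ast \geq L^{(0)}/L^{(1)}$ to extract the uniform decrease $\min\bigl(\varepsilon^2/(18L^{(0)}),\, L^{(0)}/(18(L^{(1)})^2)\bigr)$, you use monotonicity of $h$ together with $\sqrt{ab}\leq\max(a,b)$ to land on the identical constant $18$ without any case analysis, and you additionally verify the trust-region condition $\Vert\xx_{t+1}-\xx_t\Vert\leq 1/L^{(1)}$ needed to invoke Lemma~\ref{lemma:general_relaxed_smoothness_improvement_bound}, a step the paper leaves implicit.
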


\section{Experimental Details}
\label{apx:experimental_details}

\subsection{Adam Experiments}

Here, we provide details for the experiments presented in Figure~\ref{fig:shuffle_Adam_mnist}.
In order to stay as close as possible to the original Adam method, we include $\varepsilon$ and consider the decomposition
\begin{equation}
    \frac{\mm_t}{\sqrt{\vv_t} + \varepsilon} = \underbrace{ \frac{\vert \mm_t\vert}{\sqrt{\vv_t} + \varepsilon} }_{=\boldsymbol{\gamma_t}} \, \sign(\mm_t).
\end{equation}
We implemented experiments using JAX \citep{jax2018github}.
For ease of implementation, shuffling and averaging is done independently for each variable (ie., weight matrix, convolution filter, or bias vector).

\paragraph{Fashion-MNIST}
We use a vanilla CNN architecture with two convolutional layers with a receptive field of $5\times 5$ pixels and $32$ and $64$ filters, respectively.
These are interspersed with max-pooling layers with a receptive field of $3\times 3$ pixels and a stride of $2$.
We add a hidden fully-connected layer with $1024$ units and an output layer with $10$ units.
ReLU activation is used for all layers except for the output layer, which uses softmax.
We use cross-entropy loss without any additional regularization and train for $6$ epochs using a mini-batch size of $64$.

\paragraph{CIFAR-10}
We use a vanilla CNN architecture with three convolutional layers with $64$ filters and a receptive field of $5\times 5$, $96$ filters and a receptive field of $3\times 3$, and $128$ filters and a receptive field of $3\times 3$, respectively, interspersed with max-pooling layers with a receptive field of $3\times 3$ pixels and a stride of $2$.
We add two hidden fully-connected layers with $512$ and $256$ units, respectively, and an output layer with $10$ units.
ReLU activation is used for all layers except for the output layer, which uses softmax.
We use cross-entropy loss without and $\ell_2$-regularization with a factor of $0.002$ and train for $10$ epochs using a mini-batch size of $128$.

\paragraph{Step size tuning}
Per problem, we manually tune one step size for momentum-SGD and one for Adam on a logarithmic grid for minimal training loss after the fixed number of epochs.
The shuffled and average variant of Adam use the same step size as original Adam.
The best performance was achieved with the following step sizes:
\begin{itemize}
    \item Fashion-MNIST: $.06$ for M-SGD, $.001$ for Adam.
    \item CIFAR-10: $.06$ for M-SGD, $.0006$ for Adam.
\end{itemize}

\subsection{Quadratic Experiments}

\paragraph{Generating Hessians.}
We draw a random rotation matrix $\boldsymbol{R}$ from the Haar distribution\footnote{The uniform distribution on the special orthogonal group $SO(d)$ of $d$-dimensional rotation matrices.
We used the \texttt{special\_ortho\_group} function provided by the \texttt{scipy.stats} package~\citep{scipy}.} and set the Hessian to be $\HH = \boldsymbol{R}^\theta \boldsymbol\Lambda (\boldsymbol{R}^\theta)^\ast$, where $\boldsymbol{R}^\theta$ for $\theta\in [0,1]$ is a non-integer matrix power and $\boldsymbol{A}^\ast$ denotes the conjugate transpose matrix.
We can think of this as rotating the eigenvectors of the Hessian by a fraction of $\theta$ in the direction prescribed by $\boldsymbol{R}$. 
The non-integer matrix power $\boldsymbol{R}^\theta$, is computed via the eigendecomposition $\boldsymbol{R}=\boldsymbol{UDU}^\ast$ as $\boldsymbol{R}^{\theta}=\boldsymbol{UD}^{\theta} \boldsymbol{U}^\ast$ where $\boldsymbol{D}^\theta$ for the diagonal matrix $\boldsymbol{D}$ is obtained by raising its elements to the power $\theta$.

\paragraph{Computing $L_\infty$.}
To compute the smoothness constant w.r.t.~the maximum norm, we have to compute the matrix norm $\Vert\HH\Vert_{\infty, 1} = \max_{\Vert\xx\Vert_\infty \leq 1} \Vert \HH\xx\Vert_1$.
We use the fact that the solution is attained at $\xx\in \{-1, 1\}^d$~\citep[see][]{rohn2000computing} and brute-force search for the maximum $\Vert \HH\xx\Vert_1$ in this set.
Since there are $2^d$ vectors in $\{-1, 1\}^d$, this is only possible for relatively small dimension.

\paragraph{On the performance measure.}
When comparing gradient descent and sign gradient descent on these quadratic problems, we use the distance to the optimum as a performance measure.
The reason is that we are interested in a comparison over a range of different quadratics with varying $\lambda_\text{max}$.
The function value, which scales with $\lambda_\text{max}$ would not be suitable for such a comparison.
Since we are comparing optimization methods which are adapted to different norms, it might make a difference which norm we choose to compute the distance to the optimum.
We opted for the Euclidean norm to benefit the baseline method (gradient descent) as the lesser of two evils.

\section{Proofs}
\label{apx:proofs}

This section contains proofs for all statements in the main text as well in the appendix.
We proceed by order of appearance.

The proofs relating to steepest descent methods make use of the formulation introduced in Eq.~\eqref{eq:def_steepest_descent_operator} with the steepest descent operator $\ggdop{\Vert\cdot\Vert}{\zz}$.
For later use, we establish the following Lemma connecting this steepest descent operator to the dual norm.
\begin{lemma}
\label{lemma:dual_identities}
For all $\xx, \zz\in\mathbb{R}^d$, we have
\begin{align}
    \text{(a)} & \phantom{=} \langle \xx, \zz \rangle \leq \Vert \xx\Vert \Vert \zz\Vert_\ast \label{eq:dual_identity_a}\\
    \text{(b)} & \phantom{=} \Vert \ggdop{\Vert\cdot\Vert}{\zz} \Vert^2 = \langle \zz, \ggdop{\Vert\cdot\Vert}{\zz} \rangle \label{eq:dual_identity_b}\\
    \text{(c)} & \phantom{=} \Vert \ggdop{\Vert \cdot\Vert}{\zz} \Vert = \Vert \zz\Vert_\ast \label{eq:dual_identity_c}
\end{align}
\end{lemma}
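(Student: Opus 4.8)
The plan is to handle part (a) separately as an immediate consequence of the definition of the dual norm, and then to derive parts (b) and (c) together by exploiting the homogeneity of the objective that defines the steepest descent operator in Eq.~\eqref{eq:def_steepest_descent_operator}.

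For (a), the claim is trivial when $\xx = \boldsymbol 0$, so I would assume $\xx \neq \boldsymbol 0$. Then $\xx / \Vert \xx\Vert$ has unit norm, so the definition $\Vert \zz\Vert_\ast = \max_{\Vert \boldsymbol u\Vert \leq 1} \langle \zz, \boldsymbol u\rangle$ gives $\langle \zz, \xx/\Vert\xx\Vert\rangle \leq \Vert\zz\Vert_\ast$; multiplying through by $\Vert\xx\Vert$ yields the inequality.

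For (b) and (c), I would write $\boldsymbol w^\ast \defas \ggdop{\Vert\cdot\Vert}{\zz}$ and decompose the maximization in Eq.~\eqref{eq:def_steepest_descent_operator} into a radial and a directional part. Splitting $\boldsymbol w = r \boldsymbol u$ with $r = \Vert\boldsymbol w\Vert \geq 0$ and $\Vert\boldsymbol u\Vert = 1$, homogeneity of the norm gives
\begin{equation}
\max_{\boldsymbol w} \left( \langle \zz, \boldsymbol w\rangle - \tfrac{1}{2}\Vert\boldsymbol w\Vert^2 \right) = \max_{r\geq 0}\left( r \max_{\Vert\boldsymbol u\Vert = 1}\langle \zz, \boldsymbol u\rangle - \tfrac{1}{2} r^2\right) = \max_{r\geq 0}\left( r\Vert\zz\Vert_\ast - \tfrac{1}{2}r^2\right),
\end{equation}
where the last equality uses the definition of the dual norm. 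The inner scalar problem is strictly concave in $r$ and maximized uniquely at $r = \Vert\zz\Vert_\ast$. Hence every maximizer $\boldsymbol w^\ast$ must satisfy $\Vert\boldsymbol w^\ast\Vert = \Vert\zz\Vert_\ast$, which is exactly (c), and must simultaneously attain the directional maximum, so that $\langle \zz, \boldsymbol w^\ast\rangle = \Vert\boldsymbol w^\ast\Vert \, \Vert\zz\Vert_\ast = \Vert\zz\Vert_\ast^2 = \Vert\boldsymbol w^\ast\Vert^2$, which is (b).

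The argument is short, so there is no real obstacle beyond being careful on two points. First, the argmax need not be unique, so I would phrase the conclusions for an \emph{arbitrary} maximizer, relying on the strict concavity of the radial problem to guarantee that all maximizers share the norm $\Vert\zz\Vert_\ast$. Second, the degenerate case $\zz = \boldsymbol 0$, where $\boldsymbol w^\ast = \boldsymbol 0$ and both identities collapse to $0 = 0$, should be checked to confirm that the directional step (maximizing a vanishing linear functional) remains vacuously consistent.
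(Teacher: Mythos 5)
Your proof is correct. Part (a) matches the paper's (immediate from the definition of the dual norm). For (b) and (c) you take a genuinely different, and arguably cleaner, route: you solve the defining maximization in closed form by a polar decomposition $\boldsymbol w = r\boldsymbol u$, reducing it to the strictly concave scalar problem $\max_{r\geq 0}\left( r\Vert\zz\Vert_\ast - \tfrac{1}{2}r^2\right)$, whose unique maximizer $r=\Vert\zz\Vert_\ast$ delivers (c) for every maximizer and then (b) by back-substitution. The paper instead proves (b) first via a first-order stationarity condition along the ray $c \mapsto c\,\ggdop{\Vert\cdot\Vert}{\zz}$ (which is exactly the radial part of your decomposition, restricted to the ray through one maximizer), and then establishes (c) separately by a two-sided inequality argument that compares the maximizer against the scaled dual-norm achiever. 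Your version buys a single unified computation that also makes non-uniqueness of the argmax transparent (all maximizers share the optimal radius by strict concavity); the paper's version avoids having to justify that the constrained directional maximum $\max_{\Vert\boldsymbol u\Vert=1}\langle\zz,\boldsymbol u\rangle$ coincides with the dual norm defined over the closed ball, a small point you should state explicitly (it holds for $\zz\neq 0$ by positive homogeneity, and the $\zz=0$ case is trivial, as you note). Both arguments are complete and of comparable length.
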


\begin{proof}[Proof of Lemma~\ref{lemma:dual_identities}]
Statement (a) follows immediately from the definition of the dual norm.

Regarding (b), by definition of $\ggdop{\Vert\cdot\Vert}{\zz}$, we know that $\langle \zz, c\ggdop{\Vert\cdot\Vert}{\zz} \rangle - \frac{1}{2} \Vert c \ggdop{\Vert\cdot\Vert}{\zz}\Vert^2$ is maximized by $c=1$.
Hence the derivative w.r.t.~$c$,
\begin{equation}
    \frac{d}{dc} \left[ \langle \zz, c \ggdop{\Vert\cdot\Vert}{\zz} \rangle - \frac{1}{2} \Vert c \ggdop{\Vert\cdot\Vert}{\zz} \Vert^2 \right] = \langle \zz, \ggdop{\Vert\cdot\Vert}{\zz} \rangle -c \Vert \ggdop{\Vert\cdot\Vert}{\zz} \Vert^2,
\end{equation}
must evaluate to $0$ at $c=1$, which proves (b).

For $(c)$, we use the equivalent definition
\begin{equation}
    \Vert \zz\Vert_\ast = \max_{\xx\neq 0} \frac{\langle \xx, \zz\rangle}{\Vert \xx\Vert}.
\end{equation}
Assume w.l.o.g.~that $\ggdop{\Vert\cdot\Vert}{\zz}\neq 0$. Then
\begin{equation}
    \Vert \zz\Vert_\ast = \max_{\xx\neq 0} \frac{\langle \xx, \zz\rangle}{\Vert \xx\Vert} \geq \frac{\langle \ggdop{\Vert\cdot\Vert}{\zz}, \zz\rangle}{\Vert \ggdop{\Vert\cdot\Vert}{\zz} \Vert} \overset{\text{(b)}}{=} \Vert \ggdop{\Vert\cdot\Vert}{\zz} \Vert.
\end{equation}
Conversely,
Let $\xx^\prime \in \argmax_{\Vert \xx\Vert=1} \langle \xx, \zz\rangle$, such that $\langle \zz, \xx^\prime \rangle = \Vert\zz\Vert_\ast$. Then
\begin{equation}
    \frac{1}{2} \Vert \zz\Vert_\ast^2 = \langle \zz, \Vert \zz\Vert_\ast \xx^\prime \rangle - \frac{1}{2} \Vert \Vert \zz \Vert_\ast \xx^\prime \Vert^2 \leq \langle \zz, \ggdop{\Vert\cdot\Vert}{\zz} \rangle - \frac{1}{2} \Vert \ggdop{\Vert\cdot\Vert}{\zz} \Vert^2 \overset{\text{(b)}}{=} \frac{1}{2} \Vert \ggdop{\Vert\cdot\Vert}{\zz} \Vert^2,
\end{equation}
where the inequality is by definition of $\ggdop{\Vert\cdot\Vert}{\zz}$.
\end{proof}

\subsection{Proofs for Section~\ref{sec:unifying}}

\label{sec:proof_lemma_smooth}
\begin{proof}{Proof of Lemma~\ref{lemma:general_smoothness_improvement_bound}}
Define $g(\tau) = f(\xx + \tau (\yy- \xx))$ for $\tau\in [0,1]$ with $g^\prime(\tau) = \langle \nabla f(\xx+ \tau(\yy-\xx)), \yy-\xx\rangle$.
Then
\begin{equation}
\begin{split}
    f(\yy) & = f(\xx) + \int_0^1 g^\prime(\tau) d\tau \\
    & = f(\xx) + \int_0^1 \langle \nabla f(\xx+ \tau(\yy-\xx)), \yy-\xx \rangle d\tau \\
    & = f(\xx) + \langle \nabla f(\xx), \yy-\xx \rangle + \int_0^1 \langle \nabla f(\xx+ \tau(\yy-\xx)) - \nabla f(\xx), \yy-\xx \rangle d\tau\\
    & \overset{\eqref{eq:dual_identity_a}}{\leq} f(\xx) + \langle \nabla f(\xx), \yy-\xx \rangle + \int_0^1 \Vert \nabla f(\xx+ \tau(\yy-\xx)) - \nabla f(\xx) \Vert_\ast  \Vert \yy-\xx \Vert d\tau\\
    & \overset{\eqref{eq:general_smoothness}}{\leq} f(\xx) + \langle \nabla f(\xx), \yy-\xx \rangle + \int_0^1 L \Vert \tau(\yy-\xx)\Vert \Vert \yy-\xx \Vert d\tau \\
    & = f(\xx) + \langle \nabla f(\xx), \yy-\xx \rangle + L \Vert \yy-\xx \Vert^2 \int_0^1 \tau d\tau \\
    & = f(\xx) + \langle \nabla f(\xx), \yy-\xx \rangle + \frac{L}{2} \Vert \yy-\xx \Vert^2.
\end{split}
\end{equation}
The first inequality is due to Lemma \ref{lemma:dual_identities}(a) and the second inequality uses the $L$-smoothness.
\end{proof}

\begin{proof}[Proof of Lemma~\ref{lemma:steepest_descent_improvement}]
We apply Lemma~\ref{lemma:general_smoothness_improvement_bound} with $\yy=\xx^+ = \xx - \frac{1}{L} \ggdop{\Vert\cdot\Vert}{\nabla f(\xx)}$
\begin{equation}
    \begin{split}
    f(\xx^+) & \leq f(\xx) + \langle \nabla f(\xx), \xx^+ - \xx \rangle + \frac{L}{2} \Vert \xx^+ - \xx\Vert^2 \\
    & = f(\xx) + \langle \nabla f(\xx), - \frac{1}{L} \ggdop{\Vert\cdot\Vert}{\nabla f(\xx)} \rangle + \frac{L}{2} \left\Vert -\frac{1}{L} \ggdop{\Vert\cdot\Vert}{\nabla f(\xx)} \right\Vert^2 \\
    & = f(\xx) - \frac{1}{L} \left( \langle \nabla f(\xx), \ggdop{\Vert\cdot\Vert}{\nabla f(\xx)} \rangle - \frac{1}{2} \Vert \ggdop{\Vert\cdot\Vert}{\nabla f(\xx)} \Vert^2  \right).
    \end{split}
\end{equation}
By Lemma~\ref{lemma:dual_identities}, we have $\langle \nabla f(\xx), \ggdop{\Vert\cdot\Vert}{\nabla f(\xx)} \rangle = \Vert \ggdop{\Vert\cdot\Vert}{\nabla f(\xx)} \Vert^2 = \Vert \nabla f(\xx)\Vert_\ast^2$.
Substituting this in yields the desired bound.
\end{proof}

\begin{proof}[Proof of Proposition~\ref{proposition:separable_smoothness_implies_linf}]
The dual norm of $\Vert\cdot\Vert_{\LL}$ is $\Vert\cdot\Vert_{\LL^{-1}}$, such that the assumption of $1$-smoothness w.r.t.~$\Vert\cdot\Vert_{\LL}$ amounts to
\begin{equation}
    \label{eq:separable_smoothness_implies_linf_proof_1}
    \Vert \nabla f(\yy) - \nabla f(\xx) \Vert_{\LL^{-1}} \leq \Vert \yy-\xx\Vert_{\LL} \quad \forall\, \xx, \yy\in\R^d.
\end{equation}
First, by definition of the maximum norm, we get
\begin{equation}
    \label{eq:separable_smoothness_implies_linf_proof_2}
    \Vert \zz \Vert_{\LL} = \sqrt{\sum_i l_i z_i^2} \leq \sqrt{\sum_i l_i \Vert \zz\Vert_\infty^2} = \sqrt{\sum_i l_i} \Vert \zz\Vert_\infty.
\end{equation}
Secondly, using Cauchy-Schwarz,
\begin{equation}
    \label{eq:separable_smoothness_implies_linf_proof_3}
    \Vert \zz \Vert_1 = \sum_i \vert z_i\vert = \sum_i \frac{\vert z_i\vert}{\sqrt{l_i}} \sqrt{l_i} \leq \sqrt{ \sum_i \frac{z_i^2}{l_i} } \sqrt{\sum_i l_i} = \sqrt{\sum_i l_i} \Vert \zz\Vert_{\LL^{-1}}.
\end{equation}
Combining these two inequalities with the assumption yields the assertion:
\begin{equation}
\begin{split}
    \Vert \nabla f(\yy) - \nabla f(\xx)\Vert_1
    & \overset{\eqref{eq:separable_smoothness_implies_linf_proof_3}}{\leq}
    \sqrt{\sum_i l_i} \Vert \nabla f(\yy) - \nabla f(\xx)\Vert_{\LL^{-1}}
    \overset{\eqref{eq:separable_smoothness_implies_linf_proof_1}}{\leq}
    \sqrt{\sum_i l_i} \Vert \yy-\xx\Vert_{\LL} \\
    & \overset{\eqref{eq:separable_smoothness_implies_linf_proof_2}}{\leq}
    \sqrt{\sum_i l_i} \sqrt{\sum_i l_i} \Vert \yy-\xx\Vert_\infty = \left(\sum_i l_i \right) \Vert \yy-\xx\Vert_\infty.
\end{split}
\end{equation}
\end{proof}

\begin{proof}[Proof of Proposition~\ref{proposition:linf_smoothness_replaces_separable_smoothness}]
For separable smoothness, we use the definition (Eq.~\ref{eq:separable_smoothness_def}) with $\yy=\xx+\alpha \ss$ and get
\begin{equation}
    f(\xx + \alpha \ss) \leq f(\xx) + \langle \nabla f(\xx), \ss \rangle + \frac{\alpha^2}{2} \sum_i l_i \ss_i^2 = f(\xx) + \langle \nabla f(\xx), \ss \rangle + \frac{\alpha^2}{2} \sum_i l_i.
\end{equation}
due to $\ss_i^2 = 1$ for all $i$.

For $\ell_\infty$-smoothness, we use Lemma~\ref{lemma:general_smoothness_improvement_bound} with $\yy=\xx+\alpha \ss$ and get
\begin{equation}
    f(\xx + \alpha \ss) \leq f(\xx) + \langle \nabla f(\xx), \ss \rangle + \frac{L_\infty}{2} \alpha^2 \Vert \ss\Vert_\infty^2 = f(\xx) + \langle \nabla f(\xx), \ss \rangle + \frac{\alpha^2}{2} \sum_i l_i.
\end{equation}
where the second step is due to $\Vert \ss\Vert_\infty = 1$ and the assumption that $L_\infty = \sum_i l_i$.
\end{proof}

\subsection{Proofs for Section~\ref{sec:understanding_linf}}
\label{sec:understanding_linf_proofs}

\begin{proof}[Proof of Proposition~\ref{proposition:smoothness_from_hessian_bound}]
Note that the matrix norm by construction satisfies
\begin{equation}
    \label{eq:induced_matrix_norm_submultiplicative}
    \Vert \HH\xx\Vert_\ast \leq \Vert \HH\Vert \Vert \xx\Vert.
\end{equation}
We first show that Eq.~\eqref{eq:induced_matrix_norm} defines a matrix norm.
Clearly $\Vert \HH\Vert \geq 0$ and $\Vert \HH\Vert=0$ iff $\HH=0$. Furthermore, $\Vert \lambda \HH\Vert = \vert \lambda\vert \Vert \HH\Vert$.
It remains to show subadditivity. Let $\HH, \HH^\prime\in\R^{d\times d}$
\begin{equation}
    \begin{split}
        \Vert \HH + \HH^\prime \Vert & = \max_{\Vert \xx \Vert\leq 1} \Vert (\HH + \HH^\prime) \xx\Vert_\ast \leq \max_{\Vert \xx\Vert \leq 1} \left( \Vert \HH\xx\Vert_\ast + \Vert \HH^\prime \xx\Vert_\ast \right) \\
        &\leq \max_{\Vert \xx\Vert \leq1} \Vert \HH\xx\Vert_\ast + \max_{\Vert \xx^\prime \Vert\leq 1} \Vert \HH^\prime \xx^\prime \Vert_\ast = \Vert \HH\Vert + \Vert \HH^\prime\Vert.
    \end{split}
\end{equation}

Now assume $\Vert \nabla^2 f(\xx)\Vert \leq L$ for all $x\in\R^d$.
Let $\xx, \yy\in\mathbb{R}^d$ and define $g(\tau)= \nabla f(\xx + \tau(\yy-\xx))$ for $\tau\in [0,1]$.
\begin{equation}
    \begin{split}
        \Vert \nabla f(\yy) - \nabla f(\xx) \Vert_\ast & = \left\Vert \int_0^1 \frac{d}{d\tau} g(\tau) d\tau \right\Vert_\ast \\
        & = \left\Vert \int_0^1 \nabla^2 f(\xx+ \tau(\yy-\xx)) (\yy-\xx) d\tau \right\Vert_\ast \\
        & \leq \int_0^1 \left\Vert \nabla^2 f(\xx+ \tau(\yy-\xx)) (\yy-\xx) \right\Vert_\ast d\tau\\
        & \overset{\eqref{eq:induced_matrix_norm_submultiplicative}}{\leq} \int_0^1 \Vert \nabla^2 f(\xx + \tau(\yy-\xx)) \Vert \Vert \yy-\xx \Vert d\tau \\
        & \leq L \Vert \yy- \xx\Vert \int_0^1 1 d\tau = L \Vert \yy-\xx\Vert.
    \end{split}
\end{equation}

Conversely, assume $L$-smoothness and fix $\xx\in\R^d$.
For any $\Vert \ss \Vert\leq 1$ and $\varepsilon>0$,
\begin{equation}
        \left\Vert \left( \int_0^\varepsilon \nabla^2 f(\xx + \tau \ss) d\tau \right) \ss \right\Vert_\ast = \Vert \nabla f(\xx+ \varepsilon \ss) - \nabla f(\xx) \Vert_\ast \leq \varepsilon L \Vert \ss \Vert \leq \varepsilon L.
\end{equation}
Dividing by $\varepsilon$ and letting $\varepsilon \rightarrow 0$, we get
\begin{equation}
\begin{split}
    \Vert \nabla^2 f(\xx) \: \ss \Vert_\ast &= \left\Vert  \lim_{\varepsilon\rightarrow 0} \left( \frac{1}{\varepsilon} \int_0^\varepsilon \nabla^2 f(\xx + \tau \ss) d\tau \right) \ss \right\Vert_\ast \\
    & =  \lim_{\varepsilon\rightarrow 0} \frac{1}{\varepsilon} \left\Vert \left( \int_0^\varepsilon \nabla^2 f(\xx + \tau \ss) d\tau \right) \ss \right\Vert_\ast \leq L.
\end{split}
\end{equation}
This implies $\Vert\nabla^2 f(\xx)\Vert = \sup_{\Vert \ss\Vert\leq 1} \Vert \nabla^2 f(\xx) \ss \Vert_\ast \leq L$.
\end{proof}

\begin{proof}[Proof of Proposition~\ref{proposition:linf_hessian_norm_upper_bound}]
First note that
\begin{equation}
    \Vert \HH \Vert_{\infty, 1} \defas \max_{\Vert \xx\Vert_\infty\leq 1} \Vert \HH\xx \Vert_1 = \max_{\Vert \xx\Vert_\infty\leq 1} \sum_i \left\vert \sum_j H_{ij} x_j \right\vert \leq \sum_{i, j} \vert H_{ij} \vert.
\end{equation}
Recall that $\sum_i \vert H_{ii}\vert = \sum_i H_{ii} = \sum_i \lambda_i$ for positive definite matrices. Then
\begin{equation}
    \Vert \HH \Vert_{\infty, 1} \leq \sum_{i, j} \vert H_{ij}\vert = \frac{\sum_{i, j}\vert H_{ij}\vert}{\sum_i \vert H_{ii} \vert} \sum_i \lambda_i = \rho_\text{diag}(\HH)^{-1} \sum_i \lambda_i.
\end{equation}
\end{proof}

\begin{proof}[Proof of Proposition~\ref{proposition:linf_hessian_norm_upper_bound_extended}]
Since $\HH$ is a real symmetric matrix, it has a system of orthonormal eigenvectors and can be written as
\begin{equation}
    \HH = \sum_i \lambda_i \vv^{(i)} (\vv^{(i)})^T.
\end{equation}
With that we find
\begin{equation}
    \begin{split}
        \Vert \HH\Vert_{\infty, 1} & = \max_{\Vert\xx\Vert_\infty \leq 1} \Vert \HH\xx\Vert_1 = \max_{\Vert\xx\Vert_\infty \leq 1} \left\Vert \sum_i \lambda_i (\xx^T \vv^{(i)}) \vv^{(i)} \right\Vert_1 \\
        & \leq \max_{\Vert\xx\Vert_\infty \leq 1} \sum_i \vert \lambda_i\vert \; \underbrace{ \vert \xx^T \vv^{(i)} \vert}_{\leq \Vert \vv^{(i)}\Vert_1} \; \Vert \vv^{(i)} \Vert_1 \leq \sum_i \vert \lambda_i\vert \; \Vert v^{(i)}\Vert_1^2.
    \end{split}
\end{equation}
\end{proof}

\begin{proof}[Proof of \texorpdfstring{Eq.~\eqref{eq:general_bound_linf_l2}}{the general bound}]
The fact that $\Vert \zz\Vert_\infty \leq \Vert \zz\Vert_2 \leq \Vert \zz\Vert_1$ implies
\begin{equation}
    L_2 = \sup_{\xx \neq \yy} \frac{\Vert \nabla f(\yy) - \nabla f(\xx)\Vert_2}{\Vert \yy-\xx\Vert_2} \leq \sup_{\xx\neq \yy} \frac{\Vert \nabla f(\yy) - \nabla f(\xx)\Vert_1}{\Vert \yy-\xx\Vert_\infty} = L_\infty.
\end{equation}
Conversely, using $\frac{1}{\sqrt{d}} \Vert \zz\Vert_1 \leq \Vert \zz\Vert_2 \leq \sqrt{d} \Vert \zz\Vert_\infty$ 
\begin{equation}
    L_\infty = \sup_{\xx\neq \yy} \frac{\Vert \nabla f(\yy) - \nabla f(\xx)\Vert_1}{\Vert \yy-\xx\Vert_\infty} \leq \sup_{\xx\neq\yy} \frac{\sqrt{d} \Vert \nabla f(\yy) - \nabla f(\xx)\Vert_2}{\frac{1}{\sqrt{d}}\Vert \yy-\xx\Vert_2} = dL_2.
\end{equation}
\end{proof}

\begin{proof}[Proof of Proposition~\ref{proposition:linf_lsep}]
First inequality:
First, let $\hat{l}_1,\dotsc, \hat{l}_d \geq 0$ be the minimizer in the definition of $L_\text{sep}$. For any $\zz$ with $\Vert \zz\Vert_\infty \leq 1$, we have
\begin{equation}
    \label{eq:thm_linf_lsep_eq3}
    \zz^T\HH \zz \leq \zz^T \diag(\hat{l}_1, \dotsc, \hat{l}_d) \zz = \sum_i \hat{l}_i z_i^2 \leq \sum_i \hat{l}_i.
\end{equation}
Next, we rewrite the definition of $L_\infty$ as
\begin{equation}
    L_\infty = \max_{\Vert \xx\Vert_\infty, \Vert \yy \Vert_\infty\leq 1} \xx^T \HH \yy.
\end{equation}
and let $(\hat{\xx}, \hat{\yy})$ be the maximizer.
Then due to $\HH$ being psd, we have
\begin{equation}
    \begin{split}
        0 & \leq (\hat{\xx} - \hat{\yy})^T \HH (\hat{\xx} - \hat{\yy}) =  \hat{\xx}^T \HH \hat{\xx} - 2\hat{\xx}^T \HH \hat{\yy} + \hat{\yy}^T\HH \hat{\yy} \leq 2 \sum_i \hat{l}_i - 2 \hat{\xx}^T \HH \hat{\yy},
    \end{split}
\end{equation}
where the last inequality is due to Eq.~\eqref{eq:thm_linf_lsep_eq3} applied to $\hat{\xx}$ and $\hat{\yy}$.
This proves the assertion, since $\sum_i \hat{l}_i = L_\text{sep}$ and $\hat{\xx}^T \HH \hat{\yy} = L_\infty$ by definition.

Second inequality:
We set $l_i= \sum_{j} \vert H_{ij} \vert$ and denote $\LL=\diag(l_1, \dotsc, l_d)$. Then
\begin{gather}
    [\LL - \HH]_{ii} \geq \sum_{j\neq i} \vert H_{ij}\vert \\
    \sum_{j\neq i} \vert [\LL - \HH ]_{ij} \vert = \sum_{j\neq i} \vert H_{ij} \vert
\end{gather}
making $\LL-\HH$ diagonally dominant with non-negative diagonal elements, hence positive semi-definite.
Therefore, $\LL$ is admissible in the definition of $L_\text{sep}$.
Now,
\begin{equation}
    \sum_i l_i =  \sum_{i, j} \vert H_{ij} \vert = \left( \frac{\sum_i \vert H_{ii}\vert }{\sum_{i, j} \vert H_{ij} \vert} \right)^{-1} \sum_i \lambda_i,
\end{equation}
where we used $\sum_i \vert H_{ii}\vert = \sum_i H_{ii} = \sum_i \lambda_i$.
\end{proof}

\subsection{Proofs for Appendix~\ref{apx:steepest_descent}}

\begin{proof}[Proof of Proposition~\ref{proposition:linf_block_hessian_norm_upper_bound}]
First note that
\begin{equation}
    \begin{split}
        \Vert \HH \Vert^\mathcal{B}_{\infty, 1} & \defas \max_{\Vert \xx\Vert^\mathcal{B}_\infty\leq 1} \Vert \HH\xx \Vert^\mathcal{B}_1 = \max_{\Vert \xx\Vert^\mathcal{B}_\infty \leq 1} \sum_{B\in\mathcal{B}} \Vert [\HH\xx]_B\Vert_2 = \max_{\Vert \xx\Vert^\mathcal{B}_\infty \leq 1} \sum_{B\in\mathcal{B}} \left\Vert \sum_{B^\prime\in\mathcal{B}} \HH_{BB^\prime} \xx_{B^\prime} \right\Vert_2\\
        & \leq \max_{\Vert \xx\Vert^\mathcal{B}_\infty \leq 1} \sum_{B, B^\prime \in\mathcal{B}} \left\Vert \HH_{BB^\prime} \xx_{B^\prime} \right\Vert_2 \leq \sum_{B, B^\prime \in\mathcal{B}} \max_{\zz\in \R^{\vert B^\prime\vert}, \Vert \zz \Vert_2 \leq 1} \left\Vert \HH_{BB^\prime} \zz \right\Vert_2 = \sum_{B, B^\prime \in\mathcal{B}} \Vert \HH_{BB^\prime} \Vert_2
    \end{split}
\end{equation}
Recall that the diagonal blocks $\HH_{BB}$ are positive definite since $\HH$ is positive definite and thus $\Vert \HH_{BB}\Vert_2 = \lambda_\text{max}(\HH_{BB})$ and
\begin{equation}
    \Vert \HH \Vert^\mathcal{B}_{\infty, 1} \leq \sum_{BB^\prime} \Vert \HH_{BB^\prime}\Vert_2 = \frac{\sum_{BB^\prime} \Vert \HH_{BB^\prime}\Vert_2}{\sum_B \Vert \HH_{BB}\Vert_2} \sum_B \lambda_\text{max}(\HH_{BB}) = \rho^\mathcal{B}_\text{diag}(\HH)^{-1} \sum_B \lambda_\text{max}(\HH_{BB}).
\end{equation}

\end{proof}

\begin{proof}[Proof of Proposition \ref{proposition:steepest_descent_convergence_smooth}]
Lemma~\ref{lemma:steepest_descent_improvement} gives
\begin{equation}
    \Vert \nabla f_t \Vert_\ast^2 \leq  2L(f_t - f_{t+1})
\end{equation}
Rearranging and summing for $t=0,\dotsc, T-1$ yields
\begin{equation}
\begin{split}
    \frac{1}{T} \sum_{t=0}^{T-1} \Vert \nabla f_t \Vert_\ast^2 & \leq \frac{2 L}{T} \sum_{t=0}^{T-1} (f_t - f_{t+1}) = \frac{2 L (f_0 - f_T)}{T}  \leq  \frac{ 2 L (f_0 - f^\star)}{T}.
\end{split}
\end{equation}
\end{proof}

\begin{proof}[Proof of Proposition~\ref{proposition:steepest_descent_convergence_pl}]
Combining Lemma~\ref{lemma:steepest_descent_improvement} and the PL condition gives
\begin{equation}
        f_{t+1} \leq f_t - \frac{1}{2L} \Vert \nabla f_t \Vert_\ast^2 \leq f_t - \frac{\mu}{L} (f_t - f^\star).
\end{equation}
Subtracting $f^\star$ from both sides and iterating backwards yields the statement.
\end{proof}

\subsection{Proofs for Appendix~\ref{apx:normalized_steepest_descent}}

We first prove Proposition~\ref{proposition:normalized_steepest_descent_convergence}.
\begin{proof}[Proof of Proposition~\ref{proposition:normalized_steepest_descent_convergence}]
By Lemma~\ref{lemma:general_smoothness_improvement_bound} we have
\begin{equation}
    f_{t+1} \leq f_t - \frac{\alpha_t}{L \Vert\nabla f_t\Vert_\ast} \langle \nabla f_t, \ggdop{\Vert\cdot\Vert}{\nabla f_t} \rangle + \frac{\alpha_t^2}{2L\Vert\nabla f_t\Vert_\ast^2} \Vert \ggdop{\Vert\cdot\Vert}{\nabla f_t} \Vert^2
\end{equation}
By Lemma~\ref{lemma:dual_identities}, we have $\langle \nabla f_t, \ggdop{\Vert\cdot\Vert}{\nabla f_t}\rangle = \Vert \ggdop{\Vert\cdot\Vert}{\nabla f_t}\Vert^2 = \Vert \nabla f_t\Vert_\ast^2$, which yields
\begin{equation}
    f_{t+1} \leq f_t - \frac{1}{L} \left( \alpha_t \Vert \nabla f_t\Vert_\ast - \frac{\alpha_t^2}{2} \right).
\end{equation}
With a telescopic sum, we get
\begin{equation}
    f_0 - f^\star \geq f_0 - f_T = \sum_{t=0}^{T-1} (f_t - f_{t+1}) \geq \frac{1}{L} \sum_{t=0}^{T-1} \alpha_t \Vert \nabla f_t\Vert_\ast - \frac{1}{2L} \sum_{t=0}^{T-1} \alpha_t^2
\end{equation}
Now with $\alpha_t = 1/\sqrt{t+1} \geq 1/\sqrt{T}$, we have
\begin{equation}
    L(f_0 - f^\star) \geq \frac{1}{\sqrt{T}} \sum_{t=0}^{T-1} \Vert \nabla f_t\Vert_\ast - \frac{1}{2} \underbrace{ \sum_{t=0}^{T-1} \frac{1}{t+1} }_{\leq \log(T+1)}
\end{equation}
and thus
\begin{equation}
    \frac{1}{T} \sum_{t=0}^{T-1} \Vert \nabla f_t\Vert_\ast \leq \frac{L(f_0 - f^\star)}{\sqrt{T}} + \frac{\log(T+1)}{2\sqrt{T}}
\end{equation}
\end{proof}

We now proceed to the proofs for Subsection~\ref{sec:generalizing_relaxed_smoothness} about the relaxed smoothness assumption.
All proofs are closely following the ones given for the Euclidean norm in~\citet{zhang2019analysis}.

To prove Lemma~\ref{lemma:general_relaxed_smoothness_improvement_bound}, we first need the following Lemma, which allows us to control the growth of the gradient norm in the vicinity of a point $\xx\in\R^d$.
\begin{lemma}
\label{lemma:general_relaxed_smoothness_control_change_in_gradient_norm}
Assume Eq.~\eqref{eq:generalized_relaxed_smoothness} holds and let $\xx, \yy$ with $\Vert \yy-\xx\Vert\leq \frac{1}{L^{(1)}}$.
Then 
\begin{equation}
    \Vert \nabla f(\yy)\Vert_\ast \leq 4 \left( \frac{L^{(0)}}{L^{(1)}} + \Vert \nabla f(\xx) \Vert_\ast \right).
\end{equation}
\end{lemma}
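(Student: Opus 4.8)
The plan is to bound the growth of the dual gradient norm along the straight-line segment connecting $\xx$ and $\yy$, and then close the argument with a Gr\"onwall-type inequality. First I would parametrize the segment by $\gamma(t) = \xx + t(\yy - \xx)$ for $t \in [0,1]$ and define $g(t) \defas \Vert \nabla f(\gamma(t))\Vert_\ast$, so that $g(0) = \Vert \nabla f(\xx)\Vert_\ast$ and $g(1) = \Vert \nabla f(\yy)\Vert_\ast$. Writing the change in the gradient as the integral of the Hessian along the path,
\begin{equation}
\nabla f(\gamma(t)) - \nabla f(\xx) = \int_0^t \nabla^2 f(\gamma(s)) (\yy - \xx)\, ds,
\end{equation}
gives an exact expression that I can control factor by factor.

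Next I would take dual norms of both sides and bound the right-hand side. Applying the triangle inequality for $\Vert\cdot\Vert_\ast$ under the integral sign, then the submultiplicativity of the induced matrix norm (Eq.~\eqref{eq:induced_matrix_norm_submultiplicative}), and finally the relaxed smoothness hypothesis (Eq.~\eqref{eq:generalized_relaxed_smoothness}) applied to the factor $\Vert \nabla^2 f(\gamma(s))\Vert$, yields the integral inequality
\begin{equation}
g(t) \leq g(0) + \Vert \yy - \xx\Vert \int_0^t \left( L^{(0)} + L^{(1)} g(s) \right) ds.
\end{equation}
I would deliberately keep everything in integral rather than differential form here: the dual norm $\Vert\cdot\Vert_\ast$ is only Lipschitz and not everywhere differentiable, so the integral formulation avoids any need to differentiate $t \mapsto g(t)$.

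Finally I would invoke Gr\"onwall's inequality. The substitution $\psi(t) \defas L^{(0)}/L^{(1)} + g(t)$ turns the bound into the homogeneous form $\psi(t) \leq \psi(0) + \Vert\yy-\xx\Vert\, L^{(1)} \int_0^t \psi(s)\, ds$, from which $\psi(t) \leq \psi(0)\exp\!\big(\Vert\yy-\xx\Vert\, L^{(1)}\, t\big)$. Evaluating at $t=1$ and using the radius hypothesis $\Vert \yy - \xx\Vert \leq 1/L^{(1)}$, so that the exponent is at most $1$, gives
\begin{equation}
g(1) \leq \psi(1) \leq \psi(0)\, e \leq 4\left( \frac{L^{(0)}}{L^{(1)}} + \Vert \nabla f(\xx)\Vert_\ast \right),
\end{equation}
where the last step simply uses $e < 4$. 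The argument is not deep, and the only delicate point is bookkeeping with the norms: one must route the dual norm and the matrix norm through Eq.~\eqref{eq:induced_matrix_norm_submultiplicative} correctly so that the $L^{(1)}$ term becomes exactly the Gr\"onwall rate, which is what makes the constant collapse to $e$ precisely under the condition $\Vert\yy-\xx\Vert\, L^{(1)}\leq 1$. This is the natural generalization to arbitrary norms of the Euclidean argument of \citet{zhang2019analysis}.
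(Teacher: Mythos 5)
Your proof is correct and follows essentially the same route as the paper's: the integral representation of the gradient change along the segment, the dual-norm and matrix-norm bookkeeping via Eq.~\eqref{eq:induced_matrix_norm_submultiplicative} and Eq.~\eqref{eq:generalized_relaxed_smoothness}, and a Gr\"onwall argument kept in integral form. Your shift $\psi(t) = L^{(0)}/L^{(1)} + g(t)$, which homogenizes the integral inequality before invoking Gr\"onwall, is a slightly cleaner execution than the paper's direct application of the inhomogeneous form (which bounds the residual integral by $3$ and adds $1$ to reach the constant $4$), and it yields the marginally tighter constant $e$.
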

\begin{proof}
Define $\xx(\tau) = \xx + \tau (\yy-\xx)$ as well as $\boldsymbol{g}(\tau) = \nabla f(\xx(\tau))$ with $\boldsymbol{g}^\prime(\tau) = \nabla^2f(\xx(\tau))(\yy-\xx)$.
Then
\begin{equation}
    \begin{split}
        \Vert \nabla f(\xx(t)) \Vert_\ast & = \left\Vert \nabla f(\xx) + \int_0^t \boldsymbol{g}^\prime(\tau)d\tau \right\Vert_\ast \\
        & \leq \Vert \nabla f(\xx) \Vert_\ast + \int_0^t  \Vert \boldsymbol{g}^\prime(\tau) \Vert_\ast d\tau \\
        & = \Vert \nabla f(\xx) \Vert_\ast + \int_0^t \Vert \nabla^2 f(\xx(\tau)) (\yy-\xx) \Vert_\ast d\tau  \\
        & \overset{\eqref{eq:induced_matrix_norm_submultiplicative}}{\leq} \Vert \nabla f(\xx) \Vert_\ast + \underbrace{\Vert  (\yy-\xx) \Vert}_{\leq 1/L^{(1)}} \int_0^t \underbrace{\Vert \nabla^2 f(\xx(\tau))\Vert}_{\leq L^{(0)} + L^{(1)}\Vert \nabla f(\xx(\tau))\Vert_\ast \: \eqref{eq:generalized_relaxed_smoothness}} d\tau \\
        & \leq \Vert \nabla f(\xx)\Vert_\ast + \frac{1}{L^{(1)}} \int_0^t L^{(0)} + L^{(1)} \Vert \nabla f(\xx(\tau))\Vert_\ast d\tau \\
        & = \Vert \nabla f(\xx) \Vert_\ast + t \frac{L^{(0)}}{L^{(1)}} +  \int_0^t \Vert \nabla f(\xx(\tau))\Vert_\ast d\tau
    \end{split}
\end{equation}
Applying the integral form of Groenwall's inequality\footnote{
Groenwall's inequality says that if $u(t) \leq \alpha(t) + \int_{t_0}^t \beta(\tau) u(\tau) d\tau$ for continuous $u$ and $\beta$,
then
\begin{equation}
    u(t) \leq \alpha(t) + \int_{t_0}^t \alpha(\tau) \beta(\tau) \exp \left(\int_\tau^t \beta(r) dr \right) d\tau.
\end{equation}
We apply it here with $u(t) = \Vert \nabla f(\xx(t)) \Vert_\ast$ and $\alpha(t) = \Vert \nabla f(\xx)\Vert_\ast + t L^{(0)}/L^{(1)}$ and $\beta(\tau)\equiv 1$.
} yields
\begin{equation}
    \Vert \nabla f(\xx(t))\Vert_\ast \leq \Vert \nabla f(\xx) \Vert_\ast + t \frac{L^{(0)}}{L^{(1)}} + \int_0^t \left( \Vert \nabla f(\xx) \Vert_\ast + \tau \frac{L^{(0)}}{L^{(1)}} \right) \exp(t-\tau) d\tau.
\end{equation}
We now specialize to $t=1$ and upper-bound the integrand
\begin{equation}
    \begin{split}
        \Vert \nabla f(\yy)\Vert_\ast & = \Vert \nabla f(\xx(1))\Vert_\ast \\
        & \leq \Vert \nabla f(\xx) \Vert_\ast + \frac{L^{(0)}}{L^{(1)}} + \int_0^1 \left( \Vert \nabla f(\xx) \Vert_\ast + \underbrace{\tau}_{\leq 1} \frac{L^{(0)}}{L^{(1)}} \right) \underbrace{\exp(1-\tau)}_{\leq \exp(1) < 3} d\tau \\
        & \leq \Vert \nabla f(\xx) \Vert_\ast + \frac{L^{(0)}}{L^{(1)}} + 3 \left( \Vert \nabla f(\xx) \Vert_\ast + \frac{L^{(0)}}{L^{(1)}} \right) \int_0^1 d\tau\\
        & = 4 \left( \frac{L^{(0)}}{L^{(1)}} + \Vert \nabla f(\xx) \Vert_\ast \right).
    \end{split}
\end{equation}
\end{proof}

We can now approach the proof of Lemma~\ref{lemma:general_relaxed_smoothness_improvement_bound}.
\begin{proof}[Proof of Lemma~\ref{lemma:general_relaxed_smoothness_improvement_bound}]
According to Taylor's theorem we have
\begin{equation}
    \label{eq:general_relaxed_smoothness_improvement_bound_eq_1}
    f(\yy) = f(\xx) + \langle \nabla f(\xx), \yy-\xx \rangle + \frac{1}{2} \langle  \yy-\xx, \nabla^2 f(\boldsymbol{\xi})  (\yy-\xx) \rangle
\end{equation}
with some $\boldsymbol{\xi} \in \{ \xx + \tau (\yy-\xx) \mid \tau\in [0,1]\}$.
We can bound the quadratic term as
\begin{equation}
    \label{eq:general_relaxed_smoothness_improvement_bound_eq_2}
    \begin{split}
        \langle \yy-\xx, \nabla^2 f(\boldsymbol{\xi}) (\yy-\xx) \rangle & \overset{\eqref{eq:dual_identity_a}}{\leq} \Vert \yy-\xx\Vert \Vert \nabla^2 f(\boldsymbol{\xi}) (\yy-\xx) \Vert_\ast \\
        & \overset{\eqref{eq:induced_matrix_norm_submultiplicative}}{\leq} \Vert \yy-\xx \Vert^2 \Vert \nabla^2 f(\boldsymbol{\xi})\Vert \\
        & \overset{\eqref{eq:generalized_relaxed_smoothness}}{\leq} (L^{(0)} + L^{(1)}\Vert \nabla f(\boldsymbol{\xi})\Vert_\ast) \Vert \yy-\xx \Vert^2.
    \end{split}
\end{equation}
The first inequality is by definition of the dual norm (see Lemma~\ref{lemma:dual_identities}). The second inequality is by construction of the induced matrix norm.
The final inequality uses the relaxed smoothness assumption (Eq.~\ref{eq:generalized_relaxed_smoothness}).

Next, since $\Vert \yy-\xx\Vert\leq 1/L^{(1)}$ by assumption of Lemma~\ref{lemma:general_relaxed_smoothness_improvement_bound}, we know $\Vert \boldsymbol{\xi} -\xx \Vert\leq \frac{1}{L^{(1)}}$. Lemma~\ref{lemma:general_relaxed_smoothness_control_change_in_gradient_norm} thus gives us
\begin{equation}
    \Vert \nabla f(\boldsymbol\xi)\Vert_\ast \leq 4 \left( \frac{L^{(0)}}{L^{(1)}} + \Vert \nabla f(\xx) \Vert_\ast \right).
\end{equation}
Plugging this back into Eq.~\eqref{eq:general_relaxed_smoothness_improvement_bound_eq_2} yields
\begin{equation}
    \langle \yy-\xx, \nabla^2 f(\boldsymbol\xi) (\yy-\xx)\rangle \leq (5L^{(0)} + 4L^{(1)} \Vert \nabla f(\xx)\Vert_\ast) \Vert \yy-\xx\Vert^2.
\end{equation}
Using that in Eq.~\eqref{eq:general_relaxed_smoothness_improvement_bound_eq_1} proves the assertion.
\end{proof}

Finally, we prove Theorem~\ref{thm:convergence_generalized_normalized_gradient_descent}

\begin{proof}[Proof of Theorem \ref{thm:convergence_generalized_normalized_gradient_descent}]
Using Lemma~\ref{lemma:general_relaxed_smoothness_improvement_bound} with $\xx=\xx_t$ and $\yy=\xx_{t+1}=\xx_t - \eta_t \ggdop{\Vert\cdot\Vert}{\nabla f_t}$ yields
\begin{equation}
    f_{t+1} \leq f_t  - \eta_t  \langle \nabla f_t, \ggdop{\Vert\cdot\Vert}{\nabla f_t} \rangle + \frac{\eta_t^2}{2} (5L^{(0)} + 4L^{(1)} \Vert \nabla f_t\Vert_\ast) \Vert \ggdop{\Vert\cdot\Vert}{\nabla f_t} \Vert^2.
\end{equation}
Recall from Lemma~\ref{lemma:dual_identities} that $\langle z, \ggdop{\Vert\cdot\Vert}{z} \rangle = \Vert \ggdop{\Vert\cdot\Vert}{z} \Vert^2 = \Vert z\Vert_\ast^2$ and, hence,
\begin{equation}
\begin{split}
    f_{t+1} & \leq f_t  - \eta_t  \langle \nabla f_t, \ggdop{\Vert\cdot\Vert}{\nabla f_t} \rangle + \frac{\eta_t^2}{2} (5L^{(0)} + 4L^{(1)} \Vert \nabla f_t\Vert_\ast) \Vert \ggdop{\Vert\cdot\Vert}{\nabla f_t} \Vert^2 \\
    & = f_t - \left( \eta_t - \frac{\eta_t^2}{2} (5L^{(0)} + 4L^{(1)} \Vert \nabla f_t\Vert_\ast) \right) \Vert \nabla f_t\Vert_\ast^2 \\
    & = f_t - \frac{\Vert \nabla f_t\Vert_\ast^2}{2(5L^{(0)} + 4L^{(1)} \Vert \nabla f_t\Vert_\ast)}.
\end{split}
\end{equation}
If $\varepsilon \leq \Vert \nabla f_t\Vert_\ast \leq L^{(0)} / L^{(1)}$, we get
\begin{equation}
    f_{t+1} \leq f_t - \frac{\varepsilon^2}{18L^{(0)}}
\end{equation}
If $\Vert \nabla f_t \Vert_\ast \geq L^{(0)} / L^{(1)}$, we get
\begin{equation}
    \begin{split}
        f_{t+1} & \leq f_t - \frac{\Vert \nabla f_t\Vert_\ast^2}{2(5L^{(0)} + 4L^{(1)} \Vert \nabla f_t\Vert_\ast)} \\
        & = f_t -  \frac{\Vert \nabla f_t\Vert_\ast}{ 10L^{(0)} / \Vert \nabla f_t \Vert_\ast + 8L^{(1)}} \\
        & \leq f_t - \frac{\Vert \nabla f_t\Vert_\ast}{ 18L^{(1)}} \\
        & \leq f_t - \frac{L^{(0)}}{18 (L^{(1)})^2}
    \end{split}
\end{equation}
Hence,
\begin{equation}
    f_{t+1} \leq f_t - \min\left\{ \frac{L^{(0)}}{18 (L^{(1)})^2}, \frac{\varepsilon^2}{18 L^{(0)}}  \right\}.
\end{equation}
Now assume that we have $T$ iterations with $\Vert \nabla f_t\Vert_\ast \geq \varepsilon$.
Then
\begin{equation}
    f_0 - f^\star \geq f_0 - f_T = \sum_{t=0}^{T-1} (f_t - f_{t+1}) \geq T \min\left\{ \frac{L^{(0)}}{18 (L^{(1)})^2}, \frac{\varepsilon^2}{18 L^{(0)}}  \right\}.
\end{equation}
Rearranging yields
\begin{equation}
    T \leq 18 \frac{f_0 - f^\star}{\min\left\{ \frac{L^{(0)}}{(L^{(1)})^2}, \frac{\varepsilon^2}{L^{(0)}}\right\}} = 18 (f_0 - f^\star) \max \left( \frac{L^{(0)}}{\varepsilon^2},  \frac{(L^{(1)})^2}{L^{(0)}} \right).
\end{equation}

\end{proof}

\end{document}